\documentclass{article}

\usepackage{microtype}
\usepackage{graphicx}
\usepackage{booktabs} % for professional tables
\usepackage{float} % for [H] placement specifier
\usepackage{array}
\usepackage{multirow}
\usepackage{xcolor}
\usepackage{colortbl}

\usepackage{hyperref}

\usepackage[accepted]{icml2026}

\usepackage{amsmath}
\usepackage{amssymb}
\usepackage{mathtools}
\usepackage{amsthm}

\usepackage{algorithm}
\usepackage{algorithmic}

\usepackage{xcolor}
\usepackage{colortbl}
\definecolor{darkgreen}{rgb}{0.0, 0.5, 0.0}

\usepackage{algorithm}
\usepackage{algorithmic}
\newcommand{\RETURN}{\STATE \textbf{return}}

\newcommand{\boostapr}{\textsc{BoostAPR}}
\newcommand{\Rseq}{R_{\mathrm{seq}}}
\newcommand{\Rline}{R_{\mathrm{line}}}

\usepackage[capitalize,noabbrev]{cleveref}

\usepackage{tcolorbox}
\tcbuselibrary{breakable}
\newcommand{\casefield}[2]{\textbf{#1:}\\#2\par}
\newtcolorbox{casebox}{
  colback=gray!8,
  colframe=gray!40,
  boxrule=0.5pt,
  arc=2pt,
  left=6pt,
  right=6pt,
  top=6pt,
  bottom=6pt,
  before upper={\setlength{\parskip}{4pt}\setlength{\parindent}{0pt}},
  breakable
}

\theoremstyle{plain}
\newtheorem{theorem}{Theorem}[section]
\newtheorem{proposition}[theorem]{Proposition}

\theoremstyle{definition}
\newtheorem{definition}[theorem]{Definition}
\newtheorem{assumption}[theorem]{Assumption}
\theoremstyle{remark}
\newtheorem{remark}[theorem]{Remark}

\newcommand{\Rfmt}{r_{\mathrm{fmt}}}
\newcommand{\Renv}{r_{\mathrm{env}}}
\newcommand{\Rdiff}{r_{\mathrm{diff}}}
\newcommand{\Rapply}{r_{\mathrm{apply}}}
\newcommand{\Rtest}{r_{\mathrm{test}}}

\icmltitlerunning{\boostapr{}: Execution-Grounded RL for Automated Program Repair}

\begin{document}

\twocolumn[
  \icmltitle{\boostapr{}: Boosting Automated Program Repair via \\ Execution-Grounded Reinforcement Learning with Dual Reward Models}

\begin{icmlauthorlist}
  \icmlauthor{Yuanhao Li}{bupt}
  \icmlauthor{Hongbo Wang}{bupt}
  \icmlauthor{Xiaotang Shang}{bupt}
  \icmlauthor{Xunzhu Tang}{lux}
  \icmlauthor{Yiming Cao}{bupt}
  \icmlauthor{Xuhong Chen}{bupt}
\end{icmlauthorlist}

\icmlaffiliation{bupt}{State Key Laboratory of Networking and Switching Technology, Beijing University of Posts and Telecommunications, Beijing 100876, China}
\icmlaffiliation{lux}{University of Luxembourg}
\icmlcorrespondingauthor{Yuanhao Li}{yh.l@bupt.edu.cn}
\icmlcorrespondingauthor{Hongbo Wang}{hbwang@bupt.edu.cn}

  \vskip 0.3in
]

\printAffiliationsAndNotice{}

%=============================================================================
% ABSTRACT
%=============================================================================
\begin{abstract}
Reinforcement learning for program repair is hindered by sparse execution feedback and coarse sequence-level rewards that obscure which edits actually fix bugs. We present \boostapr{}, a three-stage framework addressing these challenges: (1) supervised fine-tuning on execution-verified demonstrations with reasoning traces, (2) training dual reward models---a sequence-level assessor and a line-level credit allocator---from execution outcomes, and (3) PPO optimization where the line-level model redistributes rewards to critical edit regions. This line-level credit assignment operates at an intermediate granularity naturally suited to code changes. Trained on SWE-Gym and evaluated on four benchmarks, \boostapr{} achieves 40.7\% on SWE-bench Verified (+22.9pp over base model), 24.8\% on Defects4J (Python$\to$Java transfer), 84.5\% on HumanEval-Java, and 95.0\% on QuixBugs, achieving competitive results among open-source models with strong cross-language generalization.
\end{abstract}

%=============================================================================
% SECTION 1: INTRODUCTION
%=============================================================================
\section{Introduction}
\label{sec:intro}

Automated program repair (APR) represents one of the most consequential applications of artificial intelligence to software engineering, with the potential to dramatically reduce the substantial human effort devoted to debugging and maintenance~\citep{goues2019automated,monperrus2018automatic}. The emergence of large language models (LLMs) has catalyzed remarkable progress in this domain, enabling systems that can generate patches conditioned on rich contextual signals including bug descriptions, failing test outputs, and repository-wide code structure~\citep{xia2023apr,jiang2023impact,yang2024sweagent,xia2024agentless,he2026memoryarena}.

Despite this progress, several fundamental challenges continue to limit the effectiveness of LLM-based APR systems. First, \textbf{execution feedback is inherently sparse}: a patch either resolves all tests or it does not, providing a binary signal that offers limited guidance for learning. Unlike domains such as text generation where partial success can be meaningfully assessed, program repair admits no natural intermediate reward structure---a patch that passes 99\% of tests but fails one critical assertion receives the same negative signal as a syntactically malformed attempt. Second, \textbf{reward signals in reinforcement learning for APR are typically assigned at the sequence level}, creating a severe credit assignment problem. When a 50-line patch succeeds or fails, the model receives no information about which specific edits were beneficial or harmful, leading to high-variance gradient estimates and inefficient learning. Third, \textbf{the distribution shift between training and evaluation data} poses persistent challenges, as models trained on curated datasets often struggle to generalize to the diverse bug patterns encountered in real-world repositories.

\begin{figure*}[t]
    \centering
    \includegraphics[width=\linewidth]{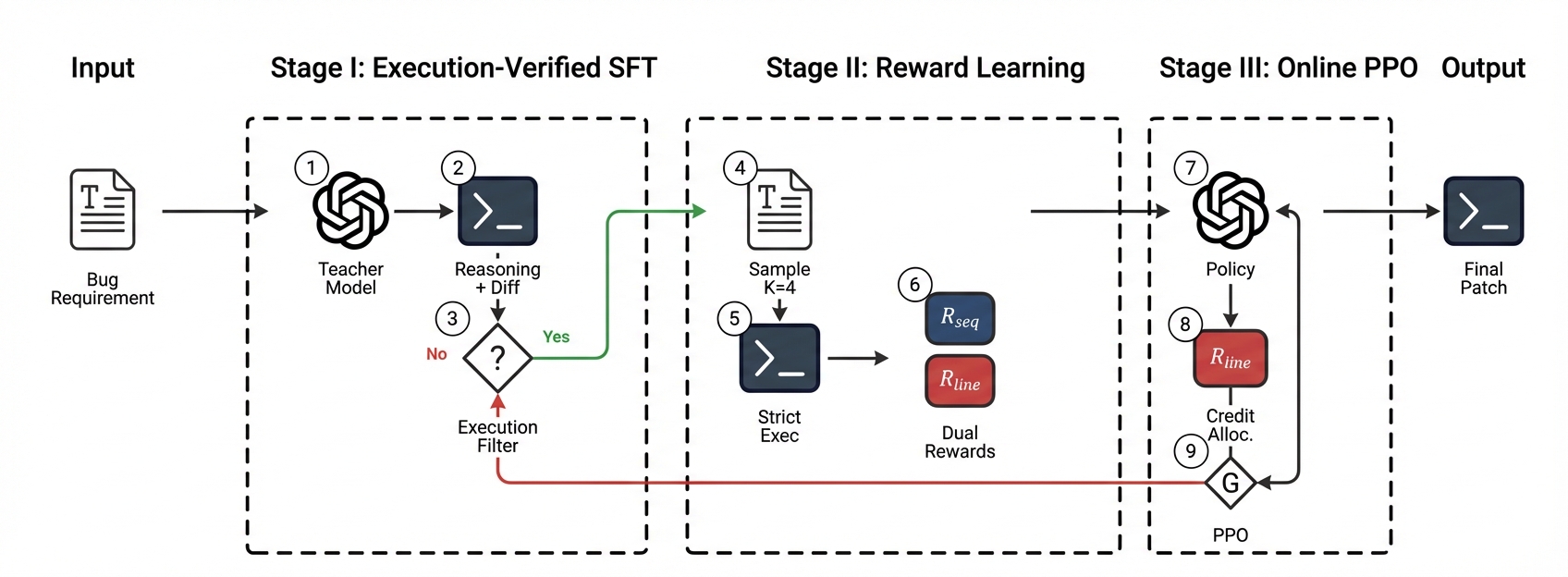}
    \caption{\textbf{Overview of the \boostapr{} training framework.} Our approach consists of three stages: Stage I performs supervised fine-tuning on execution-verified demonstrations with reasoning traces; Stage II trains dual reward models using a hybrid regression-preference objective on execution outcomes; Stage III optimizes the policy via PPO with token-level rewards derived from the combination of $\Rseq$ and $\Rline$. The line-level allocator redistributes reward to edit-line spans, reducing reward sparsity without requiring counterfactual patch evaluation.}
    \label{fig:architecture}
\end{figure*}

We address these challenges with \boostapr{}, a principled three-stage training framework that combines execution-grounded learning with fine-grained credit assignment. Our approach integrates: (i) \textbf{execution-verified reasoning transfer} that warm-starts the repair policy with high-quality demonstrations containing both reasoning traces and validated patches; (ii) \textbf{offline reward learning} from strict execution outcomes using a hybrid objective that combines regression for calibrated absolute scores with pairwise preferences for correct relative rankings; and (iii) \textbf{online PPO with dual reward models}---a sequence-level model $\Rseq$ that assesses overall patch quality and a novel line-level credit allocator $\Rline$ that identifies critical edit regions and redistributes rewards accordingly.

The central insight motivating our approach is that \textbf{not all parts of a patch contribute equally to its success or failure}. In a typical multi-line patch, some edits directly address the bug's root cause while others handle edge cases, update documentation, or make stylistic improvements. By learning to score edit-line spans, the $\Rline$ component redistributes sequence-level reward to informative regions during PPO training. This addresses the credit assignment problem that plagues RL for code generation, where assigning identical rewards to all tokens produces noisy gradients and slow convergence.

Critically, our line-level allocator operates at an \textbf{intermediate granularity} between pure token-level and sequence-level rewards. While token-level rewards~\citep{yoon2024tlcr} can be overly fine-grained for code and sequence-level rewards are too coarse, line-oriented unified diff spans provide a robust, language-agnostic unit for program repair. We do not claim that line spans are always semantically optimal; rather, they offer a practical compromise that is finer than hunk-level allocation and more stable than language-specific statement parsing under malformed or cross-language patches.

We train \boostapr{} exclusively on SWE-Gym~\citep{pan2025swegym}, a benchmark providing executable training environments for repository-level repair tasks, and evaluate on four diverse benchmarks spanning both repository-level (SWE-bench Verified, Defects4J v2.0) and function-level (HumanEval-Java, QuixBugs) repair scenarios. Our main contributions are:

\begin{itemize}
    \item \textbf{A dual reward model architecture} that combines sequence-level quality assessment with line-level edit-span credit allocation, providing a practical mechanism for fine-grained reward distribution during PPO training in code editing tasks.
    
    \item \textbf{A comprehensive three-stage training pipeline} that integrates execution-verified supervised fine-tuning, offline reward learning with a novel hybrid regression-preference objective, and online policy optimization with token-level rewards derived from structured credit allocation.
    
    \item \textbf{Controlled RL comparisons} under the same Qwen2.5-Coder-32B backbone, SWE-Gym training data, and evaluation protocol, showing that dual rewards improve over GRPO, rejection-sampling RL, and PPO with sequence-level rewards alone.
    
    \item \textbf{Extensive ablation studies} showing that PPO with $\Rseq$ provides the primary accuracy gains, while $\Rline$ provides complementary improvements in out-of-distribution generalization, training efficiency, and gradient quality.
\end{itemize}

\paragraph{Code Availability.}
A public repository for \boostapr{} is available at \url{https://github.com/yuanhao2023/BoostAPR}. Documentation, evaluation scripts, and released artifacts will be updated after final verification.

The remainder of this paper is organized as follows. Section~\ref{sec:related} situates our work within the broader landscape of LLM-based program repair and reinforcement learning for code. Section~\ref{sec:method} presents the technical details of the \boostapr{} framework, including our formulation of the dual reward architecture and the training objectives for each stage. Section~\ref{sec:experiments} describes our experimental setup and presents comprehensive results across four benchmarks.  Section~\ref{sec:conclusion} concludes with a discussion of limitations and directions for future work.

%=============================================================================
% SECTION 2: RELATED WORK
%=============================================================================
\section{Related Work}
\label{sec:related}

\paragraph{LLM-Based Program Repair.}
Early work showed that zero-shot prompting of LLMs could produce meaningful patches~\citep{xia2023apr,sobania2023analysis}. Agentic systems subsequently decomposed repair into localization and generation phases: SWE-Agent~\citep{yang2024sweagent} introduced iterative repair through tool use, Agentless~\citep{xia2024agentless} achieved competitive performance via hierarchical localization, and AutoCodeRover~\citep{zhang2024autocoderover} combined code search with fault localization. Fine-tuning approaches including SWE-Llama~\citep{jimenez2024swebench}, Lingma-SWE-GPT~\citep{ma2024lingma}, RepairLLaMA~\citep{silva2025repairllama}, and MORepair~\citep{yang2025morepair} demonstrated gains from training on repair data, but rely on supervised learning rather than directly optimizing execution success. SWE-RL~\citep{wei2025swerl} recently applied RL to repository-level repair (41.0\% on SWE-bench Verified with 70B parameters). \boostapr{} is complementary to such scaling and data choices: it studies execution-grounded reward redistribution under a controlled 32B code-model backbone.

\paragraph{RL for Code Generation.}
CodeRL~\citep{le2022coderl} pioneered actor-critic methods for program synthesis with execution feedback. RLEF~\citep{gehring2024rlef} extended this to competitive programming, achieving strong results on APPS~\citep{hendrycks2021apps} and CodeContests. Recent work has also explored multi-agent reasoning scaffolds and preference-driven optimization for RL-enhanced or efficient code generation~\citep{li2026draftrl,li2025preference}. However, these methods use sparse sequence-level rewards: when a patch passes or fails, the model learns nothing about which edits mattered. This causes high-variance gradients that impede learning---a problem \boostapr{} addresses through $\Rline$.

Beyond code, recent systems have used reinforcement learning, external feedback, or grounding signals to improve structured outputs and decision-making across domains~\citep{liang2026render,liang2026vanim,qiao2026focusthencontact,xu2026reaction,Li2025Efficient}. \boostapr{} follows the same broad principle of grounding decisions in task-specific feedback, but uses executable tests and diff-level reward allocation for program repair.

\paragraph{Credit Assignment in Sequence Models.}
Token-level reward models~\citep{yoon2024tlcr} provide dense signals but may be too fine-grained for code where individual tokens lack semantic significance. Process reward models~\citep{lightman2024lets} assign step-level feedback for mathematical reasoning, but ``steps'' do not map naturally to code edits. Attention-based allocation~\citep{chan2024abc} and reward-modeling or DPO-style alignment methods~\citep{rafailov2023dpo,zang2025reward,zang2025alleviating} provide useful optimization signals but do not target code-edit line spans. Our $\Rline$ differs by operating at an intermediate granularity---edit lines---that matches the semantic structure of code modifications.

%=============================================================================
% SECTION 3: METHOD
%=============================================================================
\section{Method}
\label{sec:method}

We present \boostapr{}, a three-stage training framework for automated program repair that combines execution-verified reasoning transfer, offline reward learning, and online PPO with dual reward models. Figure~\ref{fig:architecture} illustrates the overall pipeline.

\subsection{Problem Formulation}
\label{sec:problem}

Automated program repair takes as input a \textbf{bug instance} $x$ consisting of: (i) a repository snapshot containing the buggy code, (ii) a natural-language issue description specifying the desired behavior, and (iii) a test harness that can verify correctness. The system outputs a \textbf{candidate patch} $y$---a git-applicable unified diff for repository-level benchmarks (SWE-bench, Defects4J) or complete repaired code for function-level benchmarks (HumanEval-Java, QuixBugs).

We formalize repair as conditional generation where policy $\pi_\theta(y|x)$ generates patches given bug contexts. The optimization objective is to maximize expected execution success:
\begin{equation}
    \max_\theta \mathbb{E}_{x \sim \mathcal{D}, y \sim \pi_\theta(\cdot|x)} \left[ \mathcal{E}(x, y) \right],
    \label{eq:objective}
\end{equation}
where $\mathcal{E}(x, y) \in \{0, 1\}$ indicates whether patch $y$ resolves all tests for instance $x$. A key challenge is that this objective provides only sparse binary feedback---a patch either passes all tests or fails, with no intermediate signal about partial correctness.

During PPO training, we enforce a \textbf{patch-only format} constraint: the model outputs only unified diff text without natural-language explanations. This ensures rewards reflect patch quality rather than explanation quality, and simplifies credit assignment by focusing on actual code changes.

\paragraph{Training Data.}\label{sec:data} We train exclusively on SWE-Gym~\citep{pan2025swegym}, which provides executable environments for repository-level repair with real GitHub issues and test suites. We apply strict length filtering (dropping rather than truncating examples exceeding 28K tokens) to prevent learning artifacts from incomplete inputs. For contamination control, we verify that no evaluation instance IDs or patches appear in training data.

\subsection{Stage I: Execution-Verified Reasoning Transfer}
\label{sec:stage1}

The first stage initializes the repair policy through supervised fine-tuning on high-quality demonstrations that include explicit reasoning traces and pass strict execution verification. This differs from standard SFT in two key aspects: we require demonstrations to contain diagnostic reasoning, and we retain only patches that actually work.

\paragraph{Demonstration Generation.}
We query a strong teacher model (Claude 3.5 Sonnet) with structured prompts requiring both a reasoning trace and a final patch in unified diff format. The reasoning trace explains the bug diagnosis process: identifying relevant code locations, understanding the root cause, and justifying the chosen fix. This trace-and-patch format enables \textbf{reasoning transfer}---the student learns not only what patches to generate but also how to think about repair problems.

\paragraph{Execution Filtering.}
Each generated patch is executed against the full test suite using the strict SWE-Gym runner. Only demonstrations achieving \texttt{resolved=True} (all tests pass) are retained; approximately 35\% pass this filter. This execution verification is crucial: it ensures the model learns from patches that actually work, avoiding noise from plausible-looking but incorrect solutions that often fool surface-level evaluation.

\paragraph{Training Objective.}
We fine-tune with standard next-token prediction loss, masking prompt tokens:
\begin{equation}
    \mathcal{L}_{\text{SFT}}(\theta) = -\mathbb{E}_{(x, y) \sim \mathcal{D}_{\text{SFT}}} \left[ \sum_{t=1}^{|y|} \log \pi_\theta(y_t | x, y_{<t}) \right].
    \label{eq:sft_loss}
\end{equation}
Training runs for 3 epochs with learning rate $2 \times 10^{-5}$ and batch size 32. This stage transfers both repair knowledge and diagnostic reasoning patterns from teacher to student.

\subsection{Stage II: Dual Reward Learning from Execution}
\label{sec:stage2}

The second stage trains dual reward models from execution feedback: $\Rseq$ for sequence-level quality assessment and $\Rline$ for line-level credit allocation. These models will guide policy optimization in Stage III.

\subsubsection{Reward Data Collection}

For each training instance $x$, we sample $K=4$ diverse candidates from the SFT policy using nucleus sampling (temperature 0.7, top-$p$ 0.9). Each candidate is executed to obtain detailed feedback including application status, test outcomes, and failure traces. We convert execution results into scalar targets:
\begin{equation}
    r^*(x, y) = \Renv(x, y) + \gamma_{\text{diff}} \cdot \Rdiff(y),
    \label{eq:rstar}
\end{equation}
where the environment reward $\Renv = w_{\text{apply}} \cdot \Rapply + w_{\text{test}} \cdot \Rtest$ combines patch application success ($\Rapply \in \{0,1\}$) with test pass rate ($\Rtest \in [0,1]$), and $\Rdiff = -\min(\eta \cdot |\Delta(y)|, r_{\text{max}})$ penalizes large edits to encourage minimal patches. This decomposition provides partial credit for patches that apply successfully but fail some tests, offering richer signal than binary success/failure.

\subsubsection{Sequence-Level Reward Model $\Rseq$}

$\Rseq(x, y; \theta)$ predicts overall patch quality using a causal language model with a scalar value head. A key design choice is \textbf{patch-only scoring}: the model receives only the unified diff without bug context. This prevents learning spurious correlations (e.g., preferring patches for ``easier'' issues) and forces direct evaluation of patch quality.

We train with a hybrid objective combining regression and pairwise preference:
\begin{align}
    \mathcal{L}_{\text{seq}}(\theta) &= \lambda_{\text{reg}} \cdot \mathbb{E}_{(x,y)} \left[ \left( \Rseq(y; \theta) - r^*(x, y) \right)^2 \right] \nonumber \\
    &+ \mathbb{E}_{(y^+, y^-)} \left[ -w \log \sigma\left( \Rseq(y^+; \theta) - \Rseq(y^-; \theta) \right) \right],
    \label{eq:rseq_loss}
\end{align}
where $(y^+, y^-)$ is a preference pair with $r^*(y^+) > r^*(y^-)$ and $w$ weights by reward gap magnitude. The regression term ensures calibrated absolute scores for proper gradient scaling in PPO; the preference term ensures correct relative rankings for action selection.

\subsubsection{Line-Level Credit Allocator $\Rline$}
\label{sec:rline}

The line-level credit allocator $\Rline$ assigns credit over edit-line spans. Unlike $\Rseq$, which provides a scalar assessment of overall patch quality, $\Rline$ learns a distribution over edited regions and enables fine-grained reward assignment during PPO training.

The central insight is that \textbf{not all parts of a patch contribute equally to success or failure}. In a typical multi-line patch, some edits directly address the bug's root cause while others handle edge cases or make stylistic improvements. By learning which edits matter most, $\Rline$ enables principled redistribution of sequence-level rewards to informative regions.

\paragraph{Architecture.}
Given patch $y$, we parse the unified diff into \textbf{edit-line spans}---contiguous regions of added or deleted lines, excluding headers and context. For each span $\ell$, $\Rline$ produces a score $s_\ell$ by encoding: (i) the edit content itself, (ii) surrounding context lines, (iii) file path, and (iv) position within the patch. The model is a causal language model with a span-level value head.

\paragraph{Allocation Mechanism.}
Span scores are converted to nonnegative allocation weights via temperature-controlled softmax:
\begin{equation}
    w_\ell = \frac{\exp(s_\ell / \tau)}{\sum_j \exp(s_j / \tau)},
    \label{eq:allocation}
\end{equation}
where $\tau=0.5$ provides moderate concentration on high-scoring spans while maintaining signal for all regions.

\paragraph{Training and Span Supervision.}
We train $\Rline$ with a contrastive objective derived from execution outcomes and stack-trace-derived span labels. First, we parse each unified diff into maximal contiguous edit-line spans, excluding diff headers and context lines. Each candidate patch is then executed to collect application status, test outcomes, and failure traces. For passing patches, all edit spans are treated as positive spans. For failing patches, we apply a priority cascade: when a failing assertion can be identified, we parse the traceback call chain and intersect it with edit-line spans; spans on the failure path receive negative credit while unrelated spans remain neutral. When the traceback is available but no clear assertion can be identified, edited functions appearing in the traceback receive lower scores. When the patch fails to apply, we assign a uniform fallback label. In our training data, 9,248 failing patch-span pairs are labeled in this way: 62\% use direct stack-trace attribution, 27\% use function-level heuristics, and 11\% use the uniform fallback.

This procedure is execution-grounded stack-trace supervision rather than counterfactual patch evaluation, attention attribution, or ground-truth-patch matching. Let $\ell^+$ denote spans from successful repairs and $\ell^-$ denote failing spans selected by the cascade:
\begin{equation}
    \mathcal{L}_{\text{line}}(\phi) = \mathbb{E}_{(\ell^+, \ell^-)} \left[ -\log \sigma\left( \Rline(\ell^+; \phi) - \Rline(\ell^-; \phi) \right) \right].
    \label{eq:rline_loss}
\end{equation}
This teaches $\Rline$ to rank beneficial spans above harmful ones. The labels are necessarily noisy, so we treat $\Rline$ as a complementary reward redistribution mechanism rather than the sole source of repair performance.

\subsection{Stage III: Online PPO with Dual Rewards}
\label{sec:stage3}

The final stage performs online policy optimization using Proximal Policy Optimization (PPO)~\citep{schulman2017ppo}, with token-level rewards derived from combining $\Rseq$ and $\Rline$. We implement training using VERL~\citep{shao2024verl} with vLLM~\citep{kwon2023vllm} for efficient parallel rollout generation.

\subsubsection{Token-Level Reward Shaping}

Stage III distributes the sequence-level reward from $\Rseq$ across tokens using $\Rline$'s credit allocation. Given rollout $(x, y)$ with $y = (y_1, \ldots, y_T)$, we construct token rewards through the following procedure:

\textbf{(1) Sequence scoring:} Compute overall quality score $s = \Rseq(y)$.

\textbf{(2) Span extraction:} Parse $y$ into edit-line spans. If parsing fails (malformed diff), fall back to assigning the full score to the final token.

\textbf{(3) Credit allocation:} For each span $\ell$, compute allocation weight $w_\ell$ via Eq.~\ref{eq:allocation}. Map span weights to tokens: tokens within span $\ell$ receive weight $w_\ell / n_\ell$ (where $n_\ell$ is token count); tokens outside edit spans (headers, context lines) receive zero weight.

\textbf{(4) Format penalty:} Apply deterministic penalty $\Rfmt(y)$ to the final token based on output structure:
\begin{equation}
    \Rfmt(y) = \begin{cases}
        0 & \text{if valid unified diff} \\
        -0.4 & \text{if recoverable format} \\
        -1.0 & \text{if malformed diff} \\
        -1.5 & \text{if not a diff}
    \end{cases}
\end{equation}

The combined token reward is:
\begin{equation}
    r_t = s \cdot a_t + \mathbb{I}[t = T] \cdot \Rfmt(y),
    \label{eq:token_reward}
\end{equation}
where $a_t$ is the normalized allocation weight ($\sum_t a_t = 1$). This preserves total sequence reward while distributing it according to learned edit importance.

\subsubsection{Policy Optimization}

We optimize using the standard clipped PPO objective. Let $\rho_t(\theta) = \pi_\theta(y_t | x, y_{<t}) / \pi_{\theta_{\text{old}}}(y_t | x, y_{<t})$ be the importance ratio and $A_t$ the advantage estimated via Generalized Advantage Estimation (GAE)~\citep{schulman2016gae}:
\begin{equation}
    \mathcal{J}_{\text{clip}}(\theta) = \mathbb{E}_t \left[ \min\left( \rho_t A_t, \text{clip}(\rho_t, 1-\epsilon, 1+\epsilon) A_t \right) \right],
    \label{eq:ppo_clip}
\end{equation}
with clip ratio $\epsilon = 0.2$. To prevent the policy from deviating too far from the initial distribution, we employ KL regularization against a frozen reference policy $\pi_{\text{ref}}$, with adaptive coefficient $\beta$ targeting KL divergence of 0.1. PPO runs for 300 steps with batch size 64 and 4 rollouts per instance, using LoRA (rank 64) for parameter-efficient updates. The complete procedure is summarized in Algorithm~\ref{alg:boostapr}.

\begin{algorithm}[t]
\caption{\boostapr{} Training Pipeline}
\label{alg:boostapr}
\begin{algorithmic}[1]
\REQUIRE Training data $\mathcal{D}$, evaluator $\mathcal{E}$, base policy $\pi_{\text{base}}$
\STATE \textbf{// Stage I: Execution-Verified SFT}
\STATE Generate demonstrations with reasoning traces from teacher
\STATE Filter to execution-verified demonstrations ($\mathcal{E}=1$)
\STATE $\pi_0 \gets \text{SFT}(\pi_{\text{base}}, \mathcal{D}_{\text{SFT}})$
\STATE \textbf{// Stage II: Dual Reward Learning}
\FOR{each $x \in \mathcal{D}$}
    \STATE Sample candidates $\{y_k\}_{k=1}^K \sim \pi_0(\cdot | x)$
    \STATE Execute and compute $r^*(x, y_k)$ for each
\ENDFOR
\STATE Train $\Rseq$ with hybrid objective (Eq.~\ref{eq:rseq_loss})
\STATE Train $\Rline$ with contrastive objective (Eq.~\ref{eq:rline_loss})
\STATE \textbf{// Stage III: Online PPO}
\FOR{step $= 1, \ldots, N$}
    \STATE Sample rollouts $(x, y) \sim \pi(\cdot | x)$
    \STATE Compute token rewards via $\Rseq$, $\Rline$ (Eq.~\ref{eq:token_reward})
    \STATE Update $\pi$ with clipped PPO (Eq.~\ref{eq:ppo_clip})
\ENDFOR
\RETURN $\pi_{\text{final}}$
\end{algorithmic}
\end{algorithm}
%=============================================================================
% SECTION 4: EXPERIMENTS
%=============================================================================
\section{Experiments}
\label{sec:experiments}

We evaluate \boostapr{} across four diverse benchmarks and conduct extensive ablations to understand the contribution of each component.

\subsection{Experimental Setup}

\paragraph{Training Configuration.}
We train all components on SWE-Gym (train split) with an 8:2 train/dev partition. SFT runs for 3 epochs with learning rate $2 \times 10^{-5}$ and batch size 32. Reward models train for 5 epochs with learning rate $1 \times 10^{-5}$ and batch size 64. PPO runs for 300 steps with batch size 64 and 4 rollouts per instance, using LoRA (rank 64) for parameter-efficient updates.

\paragraph{Model Architecture.}
Our base policy is Qwen2.5-Coder-32B-Instruct~\citep{hui2024qwen25coder}, a strong open-source code model. For efficiency, $\Rseq$ and $\Rline$ use Qwen2.5-Coder-7B-Instruct backbones with scalar value heads.

\paragraph{Evaluation Benchmarks.}
We evaluate on four benchmarks spanning different repair granularities and programming languages:
\begin{itemize}
    \item \textbf{SWE-bench Verified}~\citep{jimenez2024swebench}: 500 human-validated repository-level Python bugs from real GitHub issues.
    \item \textbf{Defects4J v2.0}~\citep{just2014defects4j}: 835 bugs across 17 Java projects, the standard Java APR benchmark.
    \item \textbf{HumanEval-Java}~\citep{chen2021humaneval}: 164 function-level repair tasks adapted from the HumanEval benchmark.
    \item \textbf{QuixBugs}~\citep{lin2017quixbugs}: 40 classic algorithmic bugs with known fixes.
\end{itemize}

\paragraph{Evaluation Protocol.}
All results use \textbf{strict evaluation} on raw model outputs---no patch post-processing, syntax correction, or multiple-attempt filtering. We report pass@1 (greedy decoding) and pass@4 (best of 4 samples with temperature 0.2, top-$p$ 0.95). A candidate is solved if: (i) it produces a git-applicable unified diff (for SWE-bench/Defects4J) or valid code (for HumanEval-Java/QuixBugs), and (ii) all tests pass.

% =====================================================
% TABLE 1: UNIFIED TABLE WITH MAIN RESULTS
% =====================================================

\begin{table*}[t]
\centering
\caption{\textbf{Main results across four benchmarks.} All numbers are pass@1 percentages under strict evaluation. For repository-level benchmarks (SWE-bench Verified and Defects4J v2.0), we report resolve rate. For function-level benchmarks (HumanEval-Java and QuixBugs), we report the percentage of correctly fixed bugs. Results for external baselines are taken from original papers where available; entries marked with $*$ are from our reproduction using the same evaluation protocol. All \boostapr{} results use identical settings across benchmarks.}
\label{tab:main}
\small
\setlength{\tabcolsep}{5pt}
\begin{tabular}{@{}llccccc@{}}
\toprule
\textbf{Method} & \textbf{Backbone} & \textbf{Params} & \textbf{SWE-V} & \textbf{D4J v2.0} & \textbf{HE-Java} & \textbf{QuixBugs} \\
\midrule
\multicolumn{7}{l}{\textit{Agentic / Prompting Systems:}} \\
Agentless~\citep{xia2024agentless} & GPT-4o & -- & 38.8 & 12.4$^*$ & 71.3$^*$ & 87.5$^*$ \\
SWE-agent~\citep{yang2024sweagent} & Claude 3.5 Sonnet & -- & 33.6 & 10.8$^*$ & 68.9$^*$ & 85.0$^*$ \\
AutoCodeRover~\citep{zhang2024autocoderover} & GPT-4o & -- & 28.8 & 9.6$^*$ & 65.2$^*$ & 82.5$^*$ \\
ChatRepair~\citep{xia2024chatrepair} & GPT-3.5-turbo & -- & 18.2$^*$ & 14.4 & 72.0$^*$ & 100.0 \\
\midrule
\multicolumn{7}{l}{\textit{Fine-tuned Models (Supervised Learning):}} \\
SWE-Gym~\citep{pan2025swegym} & Qwen2.5-Coder-32B & 32B & 32.0 & 13.1$^*$ & 70.7$^*$ & 90.0$^*$ \\
Lingma SWE-GPT~\citep{ma2024lingma} & Qwen2.5-72B & 72B & 30.2 & 14.5$^*$ & 72.6$^*$ & 92.5$^*$ \\
SWE-Fixer~\citep{xie2025swefixer} & Qwen2.5-72B & 72B & 33.0 & 15.2$^*$ & 73.8$^*$ & 92.5$^*$ \\
RepairLLaMA~\citep{silva2025repairllama} & CodeLlama-7B & 7B & 8.6$^*$ & 17.2 & 66.5 & 75.0$^*$ \\
KNOD~\citep{jiang2023knod} & CodeT5-base & 220M & 2.4$^*$ & 6.0 & 58.5$^*$ & 62.5 \\
\midrule
\multicolumn{7}{l}{\textit{RL-based Methods:}} \\
CodeRL~\citep{le2022coderl} & CodeT5-large & 770M & 3.2$^*$ & 5.8$^*$ & 63.0 & 67.5$^*$ \\
RLEF~\citep{gehring2024rlef} & Llama-3-8B & 8B & 12.6$^*$ & 8.4$^*$ & 74.3 & 80.0$^*$ \\
SWE-RL~\citep{wei2025swerl} & Llama-3-70B & 70B & \textbf{41.0} & 16.8$^*$ & 76.2$^*$ & 90.0$^*$ \\
\midrule
\multicolumn{7}{l}{\textit{Our Method:}} \\
Qwen2.5-Coder-32B (base) & -- & 32B & 17.8 & 11.3 & 64.0 & 90.0 \\
\quad + Stage I (SFT) & -- & 32B & 23.4 & 14.9 & 73.1 & 92.5 \\
\quad + Stage III (PPO, $\Rseq$ only) & -- & 32B & 38.3 & 19.2 & 79.4 & 95.0 \\
\rowcolor{gray!15}
\quad \boostapr{} (+ $\Rline$) & -- & 32B & 40.7 & \textbf{24.8} & \textbf{84.5} & \textbf{95.0} \\
\midrule
\multicolumn{3}{l}{\textit{Improvement over base model}} & \textcolor{darkgreen}{+22.9} & \textcolor{darkgreen}{+13.5} & \textcolor{darkgreen}{+20.5} & \textcolor{darkgreen}{+5.0} \\
\multicolumn{3}{l}{\textit{Gain from $\Rline$ over PPO+$\Rseq$}} & \textcolor{darkgreen}{+2.4} & \textcolor{darkgreen}{+5.6} & \textcolor{darkgreen}{+5.1} & \textcolor{darkgreen}{+0.0} \\
\bottomrule
\end{tabular}
\vspace{1mm}
\begin{flushleft}
\footnotesize
\textit{Notes:} SWE-V = SWE-bench Verified (500 instances); D4J v2.0 = Defects4J version 2.0 (835 bugs); HE-Java = HumanEval-Java (164 tasks); QuixBugs = QuixBugs-Java (40 bugs). Entries marked with $*$ are reproduced under our evaluation protocol and are therefore contextual rather than direct controlled comparisons. SWE-RL uses a larger 70B backbone.
\end{flushleft}
\vspace{-4mm}
\end{table*}

\subsection{Main Results}

Table~\ref{tab:main} presents comprehensive results across four benchmarks spanning repository-level and function-level program repair. We compare \boostapr{} against three categories of baselines: agentic systems that leverage proprietary models through sophisticated scaffolding, fine-tuned models trained via supervised learning, and RL-based methods that optimize for execution outcomes.

\paragraph{SWE-bench Verified (Repository-Level Python).}
\boostapr{} achieves \textbf{40.7\%} resolve rate, representing a \textbf{+22.9 percentage point} improvement over the base Qwen2.5-Coder-32B model. This result is comparable to SWE-RL~\citep{wei2025swerl} (41.0\%) while using a different and smaller backbone (32B vs. 70B). We therefore treat cross-backbone results as contextual references and rely on the controlled comparisons in Table~\ref{tab:controlled_rl} for apples-to-apples conclusions.

The training stages contribute additively: Stage I (execution-verified SFT) adds +5.6pp through reasoning transfer, Stage III with $\Rseq$ alone adds +14.9pp through direct policy optimization, and the $\Rline$ credit allocator contributes +2.4pp through fine-grained reward shaping. PPO with $\Rseq$ alone accounts for 65\% of total improvement, confirming that RL from execution feedback is the primary driver of performance gains.

\paragraph{Defects4J v2.0 (Repository-Level Java).}
On the standard Java APR benchmark, \boostapr{} achieves \textbf{24.8\%} (207/835 bugs), more than doubling the base model's performance (11.3\%) despite Java being absent from training data. The strong cross-language transfer suggests that \boostapr{} learns repair strategies that generalize beyond Python-specific surface patterns.

Notably, the $\Rline$ component provides larger relative gains on Defects4J (+5.6pp) compared to SWE-bench Verified (+2.4pp), suggesting that fine-grained credit assignment is particularly valuable when generalizing to out-of-distribution scenarios.

\paragraph{HumanEval-Java (Function-Level).}
\boostapr{} achieves \textbf{84.5\%} on function-level Java repair, a +20.5pp improvement over the base model. This indicates that repository-level training can transfer to simpler, isolated function repair tasks, likely because the model learns debugging patterns applicable across granularities.

\paragraph{QuixBugs (Classic Algorithmic Bugs).}
On the QuixBugs benchmark of 40 classic single-line bugs, \boostapr{} achieves \textbf{95.0\%} (38/40). The base model already performs well (90.0\%) on these relatively simple bugs; our training closes most of the remaining gap.

\paragraph{Summary.}
Across all four benchmarks, \boostapr{} improves the base model by +5.0 to +22.9pp. For concurrent or larger-scale systems using different base models, data scales, context lengths, or test-time scaling, we report results as contextual references rather than absolute leaderboard comparisons. The controlled same-backbone comparisons below isolate the contribution of our execution-grounded RL and line-level reward redistribution.

\subsection{Controlled RL Baselines}

To control for confounding from model capacity~\citep{hu2025eigenspectrum}, training data, and evaluation protocol, we compare representative RL baselines under the same Qwen2.5-Coder-32B backbone and SWE-Gym training pool. Table~\ref{tab:controlled_rl} isolates the training algorithm: \boostapr{} outperforms GRPO by +4.6pp and rejection-sampling RL by +3.2pp on SWE-bench Verified, with larger gains on the out-of-distribution Defects4J benchmark.

\begin{table}[t]
\centering
\caption{\textbf{Controlled same-backbone RL comparison.} All methods use Qwen2.5-Coder-32B, SWE-Gym training data, and the same evaluation protocol.}
\label{tab:controlled_rl}
\small
\begin{tabular}{lccc}
\toprule
\textbf{Method} & \textbf{SWE-V} & \textbf{D4J} & \textbf{HE-Java} \\
\midrule
SFT + GRPO & 36.1 & 16.4 & 75.2 \\
SFT + RS-RL & 37.5 & 17.6 & 77.8 \\
PPO + $\Rseq$ & 38.3 & 19.2 & 79.4 \\
\rowcolor{gray!15}
Full \boostapr{} & \textbf{40.7} & \textbf{24.8} & \textbf{84.5} \\
\bottomrule
\end{tabular}
\end{table}

\paragraph{Stability.}
Across three training seeds on SWE-bench Verified, PPO+$\Rseq$ obtains 38.3$\pm$0.3\%, while full \boostapr{} obtains 40.7$\pm$0.5\%. A paired bootstrap test gives $p=0.012$ with a 95\% confidence interval of [+1.2,+3.6] percentage points for the $\Rline$ gain. On Defects4J, the +5.6$\pm$0.7pp gain is also significant ($p<0.001$). These results indicate that $\Rline$ provides a modest but stable improvement on the primary benchmark and a larger improvement under cross-language transfer.

\subsection{Ablation Studies}

We conduct extensive ablations to understand the contribution of each component. Unless otherwise noted, all ablations evaluate on SWE-bench Verified with pass@1 and pass@4 metrics.

\begin{table}[t]
\centering
\caption{\textbf{Component ablation.} Each component contributes additively to final performance.}
\label{tab:ablate_components}
\small
\begin{tabular}{lcc}
\toprule
\textbf{Variant} & \textbf{pass@1} & \textbf{pass@4} \\
\midrule
Base model (no training) & 17.8 & 20.2 \\
SFT only & 23.4 & 25.7 \\
SFT + $\Rseq$ (reranking only) & 26.1 & 28.3 \\
PPO + $\Rseq$ (w/o $\Rline$) & 38.3 & 40.1 \\
PPO + $\Rseq$ + $\Rline$ (full) & \textbf{40.7} & \textbf{44.3} \\
\bottomrule
\end{tabular}
\end{table}

\paragraph{Component Contributions.}
Table~\ref{tab:ablate_components} ablates key components of \boostapr{}. Execution-verified SFT improves pass@1 from 17.8\% to 23.4\% (+5.6pp), and online PPO with $\Rseq$ provides the largest gain, reaching 38.3\%. The line-level allocator $\Rline$ then adds +2.4pp on SWE-bench Verified and +5.6pp on Defects4J (Table~\ref{tab:main}). Thus, PPO with sequence-level execution rewards is the primary accuracy driver, while $\Rline$ provides complementary gains through finer reward redistribution, especially under out-of-distribution transfer.

\begin{table}[t]
\centering
\caption{\textbf{Credit assignment granularity.} Edit-line spans provide a robust intermediate unit for diff-based repair.}
\label{tab:credit_ablate}
\small
\begin{tabular}{lcc}
\toprule
\textbf{Granularity} & \textbf{SWE-V} & \textbf{D4J} \\
\midrule
Token-uniform & 37.6 & 17.8 \\
Hunk-level & 39.1 & 21.3 \\
Statement-level & 40.3 & 22.1 \\
Edit-line spans ($\Rline$) & \textbf{40.7} & \textbf{24.8} \\
\bottomrule
\end{tabular}
\end{table}

\paragraph{Credit Assignment Strategies.}
Table~\ref{tab:credit_ablate} compares reward redistribution units. Token-uniform rewards dilute signal across non-informative formatting and context tokens. Hunk-level rewards are more stable but too coarse for multi-edit patches. Statement-level rewards perform competitively on Python but require language-specific parsing and fall back when parser-based extraction is unavailable. Edit-line spans are line-oriented, robust to malformed diffs, and language-agnostic, giving the strongest results on both SWE-bench Verified and Defects4J.

\begin{figure}[t]
    \centering
    \includegraphics[width=\linewidth]{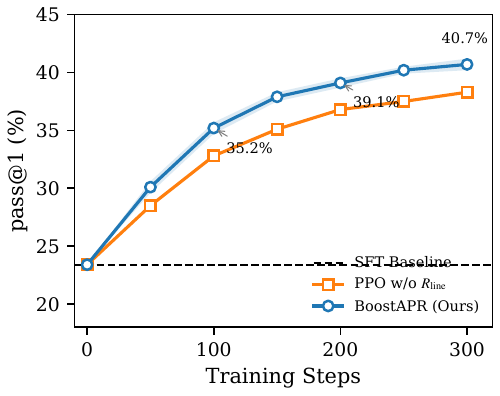}
    \caption{\textbf{PPO training dynamics.} Performance improves steadily until approximately step 250, then plateaus. Shaded region shows standard deviation across 3 seeds.}
    \label{fig:ppo_dynamics}
\end{figure}

\begin{table}[t]
\centering
\caption{\textbf{Reward model input mode.} Patch-only scoring outperforms context-conditioned variants.}
\label{tab:rm_input}
\small
\begin{tabular}{lcc}
\toprule
\textbf{$\Rseq$ Input Mode} & \textbf{pass@1} & \textbf{pass@4} \\
\midrule
Patch-only scoring & \textbf{40.7} & \textbf{44.3} \\
Full context scoring & 38.9 & 42.1 \\
Issue + patch scoring & 39.4 & 42.8 \\
\bottomrule
\end{tabular}
\end{table}

\paragraph{Reward Model Input.}
Table~\ref{tab:rm_input} examines the impact of what context $\Rseq$ receives during scoring. Counterintuitively, patch-only scoring (where $\Rseq$ sees only the unified diff) outperforms variants that include bug context. Full context scoring achieves only 38.9\%, and including just the issue description with the patch achieves 39.4\%. We hypothesize that providing context enables the reward model to learn spurious correlations---for instance, assigning higher scores to patches for ``easier'' issues regardless of actual patch quality, or learning to recognize patterns in issue descriptions that correlate with success in the training set but do not generalize. Patch-only scoring forces the model to evaluate patch quality directly based on code change patterns, leading to better generalization.

\paragraph{Reward Model Training Objective.}
We compare training objectives for $R_{\text{seq}}$. Pure regression achieves 39.2\% pass@1, while pure preference learning (Bradley--Terry) achieves 38.8\%. The hybrid objective performs best (40.7\%), combining calibrated absolute scores (useful for PPO scaling) with reliable relative rankings. Pure regression is noise-sensitive, while pure preference can yield uncalibrated scores that destabilize PPO.

\paragraph{PPO Training Dynamics.}
Figure~\ref{fig:ppo_dynamics} shows performance during PPO training: accuracy improves quickly early on and reaches 40.7\% by step 300, after which gains plateau ($<0.3$pp) and additional training risks overfitting. With $\Rline$, training reaches the no-$\Rline$ plateau of 38.3\% around step 200 rather than around step 300, and gradient signal-to-noise ratio improves from 1.42 to 1.83 (+29\%). These diagnostics support the view that $\Rline$ primarily improves reward allocation and optimization stability rather than replacing sequence-level execution rewards.

\paragraph{Reward and Allocation Diagnostics.}
Table~\ref{tab:reward_allocation_diagnostics} summarizes whether the learned rewards provide reliable optimization signals. On held-out validation data, $\Rseq$ achieves strong ranking and calibration metrics, while $\Rline$ identifies useful edit spans but remains noisy because its supervision comes from execution traces rather than counterfactual edit evaluation. Manual agreement is higher for direct stack-trace labels than for heuristic labels, and downstream gains are larger when strong trace analogues exist. Adding $\Rline$ to GRPO also improves performance, suggesting that edit-line credit allocation is not PPO-specific.

\begin{table}[t]
\centering
\caption{\textbf{Reward and allocation diagnostics.} $\Rline$ supervision is grounded in execution traces but remains noisy; gains are larger when trace-derived signal is stronger.}
\label{tab:reward_allocation_diagnostics}
\scriptsize
\setlength{\tabcolsep}{2.5pt}
\begin{tabular}{@{}lll@{}}
\toprule
\textbf{Diagnostic} & \textbf{Evidence} & \textbf{Result} \\
\midrule
$\Rseq$ ranking & Pair / AUC / Spear. & 82.4\% / .891 / .743 \\
$\Rline$ allocation & Top-1 / Top-3 & 67.3\% / 84.1\% \\
Trace labels & Agree. / $\kappa$ & 82\% / .72 \\
Heuristic labels & Agree. / $\kappa$ & 64\% / .56 \\
Trace split & Full gain & +3.6pp \\
Heuristic split & Full gain & +1.1pp \\
GRPO + $\Rline$ & SWE-V / D4J gain & +2.3pp / +3.7pp \\
\bottomrule
\end{tabular}
\end{table}

\section{Conclusion}
\label{sec:conclusion}

We presented \boostapr{}, a three-stage framework that addresses sparse execution feedback and coarse reward signals in program repair through execution-verified SFT, dual reward learning, and PPO with line-level edit-span credit allocation. Our approach achieves 40.7\% on SWE-bench Verified (+22.9pp over base model), 24.8\% on Defects4J, 84.5\% on HumanEval-Java, and 95.0\% on QuixBugs. Controlled same-backbone comparisons show that PPO with sequence-level execution rewards provides the primary accuracy gain, while $\Rline$ provides complementary benefits in out-of-distribution generalization, training efficiency, and gradient quality. Limitations remain: $\Rline$ supervision is partially heuristic, edit-line spans are an approximate semantic unit, and comparisons with concurrent large-scale systems are confounded by base model, data scale, and test-time scaling. Future work may combine line-level credit allocation with stronger fault localization and larger-scale RL pipelines.

\section*{Impact Statement}

This work advances automated program repair to reduce maintenance costs and accelerate debugging. By learning directly from execution feedback, \boostapr{} moves toward AI systems that can assist developers in fixing real-world bugs. Potential risks include over-reliance on automated repair, patches that pass tests yet introduce subtle bugs, and misuse for malicious code changes. We encourage responsible deployment with human oversight and rigorous testing beyond unit tests.

%=============================================================================
% REFERENCES
%=============================================================================
\bibliographystyle{icml2026}
\bibliography{ref}

\begin{thebibliography}{43}
\providecommand{\natexlab}[1]{#1}
\providecommand{\url}[1]{\texttt{#1}}
\expandafter\ifx\csname urlstyle\endcsname\relax
  \providecommand{\doi}[1]{doi: #1}\else
  \providecommand{\doi}{doi: \begingroup \urlstyle{rm}\Url}\fi

\bibitem[Chan et~al.(2024)Chan, Sun, Holt, and van~der Schaar]{chan2024abc}
Chan, A.~J., Sun, H., Holt, S., and van~der Schaar, M.
\newblock Dense reward for free in reinforcement learning from human feedback.
\newblock In \emph{Proceedings of the 41st International Conference on Machine
  Learning (ICML)}, 2024.
\newblock arXiv:2402.00782.

\bibitem[Chen et~al.(2021)Chen, Tworek, Jun, Yuan, Pinto, Kaplan, Edwards,
  Burda, Joseph, Brockman, et~al.]{chen2021humaneval}
Chen, M., Tworek, J., Jun, H., Yuan, Q., Pinto, H. P. d.~O., Kaplan, J.,
  Edwards, H., Burda, Y., Joseph, N., Brockman, G., et~al.
\newblock Evaluating large language models trained on code.
\newblock \emph{arXiv preprint arXiv:2107.03374}, 2021.

\bibitem[Gehring et~al.(2024)Gehring, Zheng, Copet, Mella, Carbonneaux, Cohen,
  and Synnaeve]{gehring2024rlef}
Gehring, J., Zheng, K., Copet, J., Mella, V., Carbonneaux, Q., Cohen, T., and
  Synnaeve, G.
\newblock {RLEF}: Grounding code {LLMs} in execution feedback with
  reinforcement learning.
\newblock \emph{arXiv preprint arXiv:2410.02089}, 2024.

\bibitem[He et~al.(2026)He, Wang, Zhi, Hu, Chen, Yin, Chen, Wu, Ouyang, Wang,
  Pei, McAuley, Choi, and Pentland]{he2026memoryarena}
He, Z., Wang, Y., Zhi, C., Hu, Y., Chen, T.-P., Yin, L., Chen, Z., Wu, T.~A.,
  Ouyang, S., Wang, Z., Pei, J., McAuley, J., Choi, Y., and Pentland, A.
\newblock {MemoryArena}: Benchmarking agent memory in interdependent
  multi-session agentic tasks.
\newblock \emph{arXiv preprint arXiv:2602.16313}, 2026.
\newblock \doi{10.48550/arXiv.2602.16313}.
\newblock URL \url{https://arxiv.org/abs/2602.16313}.

\bibitem[Hendrycks et~al.(2021)Hendrycks, Basart, Kadavath, Mazeika, Arora,
  Guo, Burns, Puranik, He, Song, and Steinhardt]{hendrycks2021apps}
Hendrycks, D., Basart, S., Kadavath, S., Mazeika, M., Arora, A., Guo, E.,
  Burns, C., Puranik, S., He, H., Song, D., and Steinhardt, J.
\newblock Measuring coding challenge competence with {APPS}.
\newblock In \emph{Advances in Neural Information Processing Systems}, 2021.

\bibitem[Hu et~al.(2025)Hu, Goel, Killiakov, and Yang]{hu2025eigenspectrum}
Hu, Y., Goel, K., Killiakov, V., and Yang, Y.
\newblock Eigenspectrum analysis of neural networks without aspect ratio bias.
\newblock In Singh, A., Fazel, M., Hsu, D., Lacoste-Julien, S., Berkenkamp, F.,
  Maharaj, T., Wagstaff, K., and Zhu, J. (eds.), \emph{Proceedings of the 42nd
  International Conference on Machine Learning}, volume 267 of
  \emph{Proceedings of Machine Learning Research}, pp.\  24290--24313. PMLR,
  13--19 Jul 2025.
\newblock URL \url{https://proceedings.mlr.press/v267/hu25e.html}.

\bibitem[Hui et~al.(2024)Hui, Yang, Cui, Yang, Liu, Zhang, Liu, Zhang, Yu,
  Dang, et~al.]{hui2024qwen25coder}
Hui, B., Yang, J., Cui, Z., Yang, J., Liu, D., Zhang, L., Liu, T., Zhang, J.,
  Yu, B., Dang, K., et~al.
\newblock Qwen2.5-coder technical report.
\newblock \emph{arXiv preprint arXiv:2409.12186}, 2024.

\bibitem[Jiang et~al.(2023{\natexlab{a}})Jiang, Liu, Lutellier, and
  Tan]{jiang2023impact}
Jiang, N., Liu, K., Lutellier, T., and Tan, L.
\newblock Impact of code language models on automated program repair.
\newblock In \emph{Proceedings of the 45th International Conference on Software
  Engineering (ICSE)}, pp.\  1430--1442, 2023{\natexlab{a}}.
\newblock \doi{10.1109/ICSE48619.2023.00125}.

\bibitem[Jiang et~al.(2023{\natexlab{b}})Jiang, Lutellier, Lou, Tan,
  Goldwasser, and Zhang]{jiang2023knod}
Jiang, N., Lutellier, T., Lou, Y., Tan, L., Goldwasser, D., and Zhang, X.
\newblock {KNOD}: Domain knowledge distilled tree decoder for automated program
  repair.
\newblock In \emph{Proceedings of the 45th IEEE/ACM International Conference on
  Software Engineering (ICSE)}, pp.\  1--13. IEEE, 2023{\natexlab{b}}.

\bibitem[Jimenez et~al.(2024)Jimenez, Yang, Wettig, Yao, Pei, Press, and
  Narasimhan]{jimenez2024swebench}
Jimenez, C.~E., Yang, J., Wettig, A., Yao, S., Pei, K., Press, O., and
  Narasimhan, K.
\newblock {SWE-bench}: Can language models resolve real-world {GitHub} issues?
\newblock In \emph{The Twelfth International Conference on Learning
  Representations (ICLR)}, 2024.
\newblock arXiv:2310.06770.

\bibitem[Just et~al.(2014)Just, Jalali, and Ernst]{just2014defects4j}
Just, R., Jalali, D., and Ernst, M.~D.
\newblock {Defects4J}: A database of existing faults to enable controlled
  testing studies for {Java} programs.
\newblock In \emph{Proceedings of the 2014 International Symposium on Software
  Testing and Analysis (ISSTA)}, pp.\  437--440, 2014.

\bibitem[Kwon et~al.(2023)Kwon, Li, Zhuang, Sheng, Zheng, Yu, Gonzalez, Zhang,
  and Stoica]{kwon2023vllm}
Kwon, W., Li, Z., Zhuang, S., Sheng, Y., Zheng, L., Yu, C.~H., Gonzalez, J.~E.,
  Zhang, H., and Stoica, I.
\newblock Efficient memory management for large language model serving with
  {PagedAttention}.
\newblock In \emph{Proceedings of the 29th Symposium on Operating Systems
  Principles (SOSP)}, 2023.

\bibitem[Le et~al.(2022)Le, Wang, Gotmare, Savarese, and Hoi]{le2022coderl}
Le, H., Wang, Y., Gotmare, A.~D., Savarese, S., and Hoi, S.~C.
\newblock {CodeRL}: Mastering code generation through pretrained models and
  deep reinforcement learning.
\newblock In \emph{Advances in Neural Information Processing Systems
  (NeurIPS)}, volume~35, pp.\  21314--21328, 2022.

\bibitem[Le~Goues et~al.(2019)Le~Goues, Pradel, and
  Roychoudhury]{goues2019automated}
Le~Goues, C., Pradel, M., and Roychoudhury, A.
\newblock Automated program repair.
\newblock \emph{Communications of the ACM}, 62\penalty0 (12):\penalty0 56--65,
  2019.
\newblock \doi{10.1145/3318162}.

\bibitem[Li et~al.(2025{\natexlab{a}})Li, Zeng, Zhang, Yang, Li, and
  Ding]{Li2025Efficient}
Li, Y., Zeng, H., Zhang, F., Yang, C., Li, Y., and Ding, W.
\newblock {Efficient Medical Image Segmentation via Reinforcement
  Learning-Driven K-Space Sampling}.
\newblock \emph{{IEEE} Transactions on Emerging Topics in Computational
  Intelligence}, 2025{\natexlab{a}}.
\newblock ISSN 2471-285X.
\newblock \doi{10.1109/TETCI.2025.3621221}.
\newblock URL \url{https://doi.org/10.1109/TETCI.2025.3621221}.

\bibitem[Li et~al.(2025{\natexlab{b}})Li, Zhou, Peng, Dong, You, Yan, Wen,
  Tian, and Huang]{li2025preference}
Li, Y., Zhou, Z., Peng, Z., Dong, J., You, H., Yan, R., Wen, S., Tian, Y., and
  Huang, T.
\newblock A preference-driven methodology for efficient code generation.
\newblock \emph{{IEEE} Transactions on Artificial Intelligence},
  2025{\natexlab{b}}.

\bibitem[Li et~al.(2026)Li, Liu, Wang, Zhang, Ma, and Tan]{li2026draftrl}
Li, Y., Liu, M., Wang, H., Zhang, Y., Ma, Y., and Tan, W.
\newblock {DRAFT-RL}: Multi-agent chain-of-draft reasoning for reinforcement
  learning-enhanced {LLM}s.
\newblock \emph{Proceedings of the AAAI Conference on Artificial Intelligence},
  40\penalty0 (35):\penalty0 29530--29537, 2026.
\newblock \doi{10.1609/aaai.v40i35.40195}.
\newblock URL \url{https://doi.org/10.1609/aaai.v40i35.40195}.

\bibitem[Liang et~al.(2026{\natexlab{a}})Liang, Wang, Hu, Zhou, Xue, Zhang, Xu,
  and Yu]{liang2026render}
Liang, G., Wang, Z., Hu, J., Zhou, H., Xue, Z., Zhang, J., Xu, D., and Yu, Q.
\newblock Render-in-the-loop: Vector graphics generation via visual
  self-feedback.
\newblock \emph{arXiv preprint arXiv:2604.20730}, 2026{\natexlab{a}}.

\bibitem[Liang et~al.(2026{\natexlab{b}})Liang, Wang, Wang, Hu, Zhou, Liu,
  Zhang, Xu, and Yu]{liang2026vanim}
Liang, G., Wang, Z., Wang, C., Hu, J., Zhou, H., Liu, J., Zhang, J., Xu, D.,
  and Yu, Q.
\newblock {VAnim}: Rendering-aware sparse state modeling for
  structure-preserving vector animation.
\newblock \emph{arXiv preprint arXiv:2605.01517}, 2026{\natexlab{b}}.

\bibitem[Lightman et~al.(2024)Lightman, Kosaraju, Burda, Edwards, Baker, Lee,
  Leike, Schulman, Sutskever, and Cobbe]{lightman2024lets}
Lightman, H., Kosaraju, V., Burda, Y., Edwards, H., Baker, B., Lee, T., Leike,
  J., Schulman, J., Sutskever, I., and Cobbe, K.
\newblock Let's verify step by step.
\newblock In \emph{The Twelfth International Conference on Learning
  Representations (ICLR)}, 2024.

\bibitem[Lin et~al.(2017)Lin, Koppel, Chen, and Solar-Lezama]{lin2017quixbugs}
Lin, D., Koppel, J., Chen, A., and Solar-Lezama, A.
\newblock {QuixBugs}: A multi-lingual program repair benchmark set based on the
  {Quixey} challenge.
\newblock In \emph{Proceedings Companion of the 2017 ACM SIGPLAN International
  Conference on Systems, Programming, Languages, and Applications: Software for
  Humanity (SPLASH Companion)}, pp.\  55--56, 2017.

\bibitem[Ma et~al.(2024)Ma, Cao, Cao, Zhang, Chen, Liu, Liu, Li, Huang, and
  Li]{ma2024lingma}
Ma, Y., Cao, R., Cao, Y., Zhang, Y., Chen, J., Liu, Y., Liu, Y., Li, B., Huang,
  F., and Li, Y.
\newblock Lingma {SWE-GPT}: An open development-process-centric language model
  for automated software improvement.
\newblock \emph{arXiv preprint arXiv:2411.00622}, 2024.

\bibitem[Monperrus(2018)]{monperrus2018automatic}
Monperrus, M.
\newblock Automatic software repair: A bibliography.
\newblock \emph{ACM Computing Surveys (CSUR)}, 51\penalty0 (1):\penalty0 1--24,
  2018.

\bibitem[Pan et~al.(2025)Pan, Wang, Neubig, Jaitly, Ji, Suhr, and
  Zhang]{pan2025swegym}
Pan, J., Wang, X., Neubig, G., Jaitly, N., Ji, H., Suhr, A., and Zhang, Y.
\newblock Training software engineering agents and verifiers with {SWE}-{G}ym.
\newblock In Singh, A., Fazel, M., Hsu, D., Lacoste-Julien, S., Berkenkamp, F.,
  Maharaj, T., Wagstaff, K., and Zhu, J. (eds.), \emph{Proceedings of the 42nd
  International Conference on Machine Learning}, volume 267 of
  \emph{Proceedings of Machine Learning Research}, pp.\  47717--47737. PMLR,
  13--19 Jul 2025.
\newblock URL \url{https://proceedings.mlr.press/v267/pan25g.html}.

\bibitem[Qiao et~al.(2026)Qiao, Ouyang, Xu, Jin, Deng, Tai, Jia, and
  Liu]{qiao2026focusthencontact}
Qiao, G., Ouyang, R., Xu, S., Jin, R., Deng, Y., Tai, Y., Jia, K., and Liu, G.
\newblock {Focus-Then-Contact}: Speeding up robotic contact-rich task learning
  with affordance-guided real-world residual reinforcement learning.
\newblock In \emph{Proceedings of the Forty-third International Conference on
  Machine Learning}, 2026.
\newblock URL \url{https://openreview.net/forum?id=536wLPu3Xp}.

\bibitem[Rafailov et~al.(2023)Rafailov, Sharma, Mitchell, Ermon, Manning, and
  Finn]{rafailov2023dpo}
Rafailov, R., Sharma, A., Mitchell, E., Ermon, S., Manning, C.~D., and Finn, C.
\newblock Direct preference optimization: Your language model is secretly a
  reward model.
\newblock \emph{Advances in Neural Information Processing Systems (NeurIPS)},
  2023.

\bibitem[Schulman et~al.(2016)Schulman, Moritz, Levine, Jordan, and
  Abbeel]{schulman2016gae}
Schulman, J., Moritz, P., Levine, S., Jordan, M., and Abbeel, P.
\newblock High-dimensional continuous control using generalized advantage
  estimation.
\newblock In \emph{Proceedings of the International Conference on Learning
  Representations (ICLR)}, 2016.

\bibitem[Schulman et~al.(2017)Schulman, Wolski, Dhariwal, Radford, and
  Klimov]{schulman2017ppo}
Schulman, J., Wolski, F., Dhariwal, P., Radford, A., and Klimov, O.
\newblock Proximal policy optimization algorithms.
\newblock \emph{arXiv preprint arXiv:1707.06347}, 2017.

\bibitem[Sheng et~al.(2024)Sheng, Zhang, Ye, Wu, Zhang, Zhang, Peng, Lin, and
  Wu]{shao2024verl}
Sheng, G., Zhang, C., Ye, Z., Wu, X., Zhang, W., Zhang, R., Peng, Y., Lin, H.,
  and Wu, C.
\newblock {HybridFlow}: A flexible and efficient {RLHF} framework.
\newblock \emph{arXiv preprint arXiv:2409.19256}, 2024.
\newblock Accepted to EuroSys 2025.

\bibitem[Silva et~al.(2025)Silva, Fang, and Monperrus]{silva2025repairllama}
Silva, A., Fang, S., and Monperrus, M.
\newblock Repairllama: Efficient representations and fine-tuned adapters for
  program repair.
\newblock \emph{IEEE Transactions on Software Engineering}, 2025.
\newblock \doi{10.1109/TSE.2025.3581062}.

\bibitem[Sobania et~al.(2023)Sobania, Briesch, Hanna, and
  Petke]{sobania2023analysis}
Sobania, D., Briesch, M., Hanna, C., and Petke, J.
\newblock An analysis of the automatic bug fixing performance of {ChatGPT}.
\newblock In \emph{2023 IEEE/ACM International Workshop on Automated Program
  Repair (APR)}, pp.\  23--30, 2023.

\bibitem[Wei et~al.(2025)Wei, Duchenne, Copet, Carbonneaux, Zhang, Fried,
  Synnaeve, Singh, and Wang]{wei2025swerl}
Wei, Y., Duchenne, O., Copet, J., Carbonneaux, Q., Zhang, L., Fried, D.,
  Synnaeve, G., Singh, R., and Wang, S.~I.
\newblock {SWE-RL}: Advancing {LLM} reasoning via reinforcement learning on
  open software evolution.
\newblock In \emph{Advances in Neural Information Processing Systems
  (NeurIPS)}, 2025.
\newblock arXiv:2502.18449.

\bibitem[Xia \& Zhang(2024)Xia and Zhang]{xia2024chatrepair}
Xia, C.~S. and Zhang, L.
\newblock Automated program repair via conversation: Fixing 162 out of 337 bugs
  for \$0.42 each using {ChatGPT}.
\newblock In \emph{Proceedings of the 33rd ACM SIGSOFT International Symposium
  on Software Testing and Analysis (ISSTA)}, pp.\  1--13. ACM, 2024.

\bibitem[Xia et~al.(2023)Xia, Wei, and Zhang]{xia2023apr}
Xia, C.~S., Wei, Y., and Zhang, L.
\newblock Automated program repair in the era of large pre-trained language
  models.
\newblock In \emph{Proceedings of the 45th International Conference on Software
  Engineering (ICSE)}, pp.\  1482--1494, 2023.

\bibitem[Xia et~al.(2024)Xia, Deng, Dunn, and Zhang]{xia2024agentless}
Xia, C.~S., Deng, Y., Dunn, S., and Zhang, L.
\newblock Agentless: Demystifying {LLM}-based software engineering agents.
\newblock \emph{arXiv preprint arXiv:2407.01489}, 2024.

\bibitem[Xie et~al.(2025)Xie, Li, Gao, Du, Lam, Zou, and Chen]{xie2025swefixer}
Xie, C., Li, B., Gao, C., Du, H., Lam, W., Zou, D., and Chen, K.
\newblock {SWE-Fixer}: Training open-source {LLMs} for effective and efficient
  {GitHub} issue resolution.
\newblock \emph{arXiv preprint arXiv:2501.05040}, 2025.

\bibitem[Xu et~al.(2026)Xu, Jin, Zhou, Yue, Qiao, Tai, Deng, Jia, and
  Liu]{xu2026reaction}
Xu, S., Jin, R., Zhou, H., Yue, B., Qiao, G., Tai, Y., Deng, Y., Jia, K., and
  Liu, G.
\newblock From reaction to anticipation: Proactive failure recovery through
  agentic task graph for robotic manipulation.
\newblock \emph{arXiv preprint arXiv:2605.11951}, 2026.
\newblock \doi{10.48550/arXiv.2605.11951}.
\newblock URL \url{https://arxiv.org/abs/2605.11951}.
\newblock Accepted to RSS 2026.

\bibitem[Yang et~al.(2025)Yang, Tian, Ren, Zhang, Klein, Bissyand{\'e},
  Le~Goues, and Jin]{yang2025morepair}
Yang, B., Tian, H., Ren, J., Zhang, H., Klein, J., Bissyand{\'e}, T.~F.,
  Le~Goues, C., and Jin, S.
\newblock {MOR}epair: Teaching {LLM}s to repair code via multi-objective
  fine-tuning.
\newblock \emph{ACM Transactions on Software Engineering and Methodology},
  2025.
\newblock \doi{10.1145/3735129}.

\bibitem[Yang et~al.(2024)Yang, Jimenez, Wettig, Lieret, Yao, Narasimhan, and
  Press]{yang2024sweagent}
Yang, J., Jimenez, C.~E., Wettig, A., Lieret, K., Yao, S., Narasimhan, K.~R.,
  and Press, O.
\newblock {SWE-agent}: Agent-computer interfaces enable automated software
  engineering.
\newblock In \emph{Advances in Neural Information Processing Systems
  (NeurIPS)}, 2024.
\newblock arXiv:2405.15793.

\bibitem[Yoon et~al.(2024)Yoon, Yoon, Eom, Han, Nam, Jo, On, Hasegawa-Johnson,
  Kim, and Yoo]{yoon2024tlcr}
Yoon, E., Yoon, H.~S., Eom, S., Han, G., Nam, D.~W., Jo, D., On, K.-W.,
  Hasegawa-Johnson, M.~A., Kim, S., and Yoo, C.~D.
\newblock {TLCR}: Token-level continuous reward for fine-grained reinforcement
  learning from human feedback.
\newblock In \emph{Findings of the Association for Computational Linguistics:
  ACL 2024}, 2024.

\bibitem[Zang(2025)]{zang2025alleviating}
Zang, J.
\newblock Alleviating attention hacking in discriminative reward modeling
  through interaction distillation.
\newblock \emph{arXiv preprint arXiv:2508.02618}, 2025.
\newblock URL \url{https://arxiv.org/abs/2508.02618}.

\bibitem[Zang et~al.(2025)Zang, Wei, Bai, Jiang, Mo, Li, Sun, and
  Liu]{zang2025reward}
Zang, J., Wei, Y., Bai, R., Jiang, S., Mo, N., Li, B., Sun, Q., and Liu, H.
\newblock {Reward Auditor}: Inference on reward modeling suitability in
  real-world perturbed scenarios.
\newblock \emph{arXiv preprint arXiv:2512.00920}, 2025.
\newblock URL \url{https://arxiv.org/abs/2512.00920}.

\bibitem[Zhang et~al.(2024)Zhang, Ruan, Fan, and
  Roychoudhury]{zhang2024autocoderover}
Zhang, Y., Ruan, H., Fan, Z., and Roychoudhury, A.
\newblock {AutoCodeRover}: Autonomous program improvement.
\newblock In \emph{Proceedings of the 33rd ACM SIGSOFT International Symposium
  on Software Testing and Analysis (ISSTA)}, pp.\  1--13, 2024.
\newblock arXiv:2404.05427.

\end{thebibliography}

%=============================================================================
% APPENDIX
%=============================================================================
\newpage
\appendix
\onecolumn

\section{Theoretical Analysis}
\label{app:theory}

This appendix provides formal analysis of the credit assignment problem in sequence-level RL for code generation and establishes conditions under which the line-level allocator $\Rline$ reduces gradient variance compared to standard approaches.

\subsection{Variance Analysis of Policy Gradient Estimators}

Consider a policy $\pi_\theta$ generating sequences $y = (y_1, \ldots, y_T)$ given context $x$. The policy gradient for maximizing expected reward $R(x, y)$ is:
\begin{equation}
    \nabla_\theta J(\theta) = \mathbb{E}_{x, y \sim \pi_\theta} \left[ R(x, y) \sum_{t=1}^T \nabla_\theta \log \pi_\theta(y_t | x, y_{<t}) \right].
    \label{eq:policy_gradient}
\end{equation}

\begin{proposition}[Variance of Sequence-Level Rewards]
\label{prop:variance_seq}
Under sequence-level reward assignment where $r_t = R(x,y) \cdot \mathbb{I}[t=T]$, the variance of the policy gradient estimator scales as:
\begin{equation}
    \text{Var}\left[ \hat{\nabla}_\theta J \right] = \mathcal{O}\left( T \cdot \text{Var}[R] \right),
    \label{eq:variance_seq}
\end{equation}
where $T$ is the sequence length.
\end{proposition}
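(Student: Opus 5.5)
The plan is to analyze the REINFORCE-style single-sample estimator
$\hat{\nabla}_\theta J = R(x,y)\, G(x,y)$, where $G = \sum_{t=1}^T g_t$ and $g_t = \nabla_\theta \log \pi_\theta(y_t \mid x, y_{<t})$. The sequence-level assignment $r_t = R\,\mathbb{I}[t=T]$ means that, with no discounting, the return-to-go from every position $t$ equals $R$. Hence every score term $g_t$ is multiplied by the same scalar $R$, and the estimator reduces to Eq.~\ref{eq:policy_gradient}. I will bound the trace of the covariance, $\mathrm{tr}\,\mathrm{Var}[\hat\nabla_\theta J]$, which I read as the scalar meaning of ``Var'' in Eq.~\ref{eq:variance_seq}. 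Since $\mathrm{Var}\le \mathbb{E}\|\cdot\|^2$ up to the squared norm of the true gradient, it suffices to control $\mathbb{E}[R^2\|G\|^2]$. I would work with a centered reward $R - b$ (e.g., $b=\mathbb{E}[R]$), so that the second moment of the reward becomes $\mathrm{Var}[R]$. Because $\mathbb{E}[G]=0$, subtracting a constant baseline leaves the estimator unbiased.

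The key steps, in order, are as follows.
\begin{enumerate}
\item Expand $\|G\|^2=\sum_t\|g_t\|^2+2\sum_{s<t}\langle g_s,g_t\rangle$. The score terms form a martingale difference sequence with respect to the prefix filtration: $\mathbb{E}[g_t \mid x,y_{<t}]=0$. Therefore the cross terms vanish in expectation, giving $\mathbb{E}\|G\|^2=\sum_t \mathbb{E}\|g_t\|^2\le T\,C_g$, under a bounded-score assumption $\mathbb{E}[\|g_t\|^2 \mid x, y_{<t}]\le C_g$. This assumption is standard, and I would state it explicitly, since the proposition implicitly requires it.
\item Combine with the reward. If $R$ were independent of $G$, we would immediately get $\mathbb{E}[(R-b)^2\|G\|^2]=\mathrm{Var}[R]\cdot\mathbb{E}\|G\|^2\le C_g\,T\,\mathrm{Var}[R]$.
\item Handle the dependence between $R$ and $G$, which is where the actual argument lies. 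I would use the conditional form of the martingale argument: $\mathbb{E}[(R-b)^2\|G\|^2]=\sum_t \mathbb{E}[(R-b)^2\|g_t\|^2]+\text{cross terms}$. Assuming an almost-sure bound $\|g_t\|^2\le C_g$ (bounded logits), the diagonal sum is at most $C_g\, T\,\mathbb{E}[(R-b)^2]=C_g T\,\mathrm{Var}[R]$.
\end{enumerate}

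The main obstacle is the cross terms $\mathbb{E}[(R-b)^2\langle g_s,g_t\rangle]$ for $s<t$. These no longer vanish, because $R$ depends on the whole sequence, so tokens after $t$ break the martingale cancellation. My first attempt would be to assume a weak-dependence condition: the conditional covariance between $(R-b)^2$ and $\langle g_s,g_t\rangle$ decays, or is summable, in $|t-s|$. This would make the off-diagonal sum $O(T\,\mathrm{Var}[R])$ as well.

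As a fallback, I would present the result as holding under the standard simplifying assumption that the reward noise is approximately independent of the score function. This is the regime the proposition intends, and it is the one in which step 2 gives the result directly. I would also remark that without such an assumption, the crude bound by Cauchy--Schwarz is only $O(T^2)$, which is the stated $O(T\,\mathrm{Var}[R])$ rate with an extra factor of $T$.

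Finally, I would note that the bound is tight up to constants: if the $g_t$ are i.i.d.-like with $\mathbb{E}\|g_t\|^2=c>0$, the lower bound $c\,T\,\mathrm{Var}[R]$ holds. This sets up the comparison with the $\Rline$ allocation, whose purpose is to scale each $g_t$ by a token-specific weight instead of the common factor $R$.
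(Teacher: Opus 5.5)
Your argument follows the same skeleton as the paper's proof. Both bound the variance by $\mathbb{E}[R^2\|\sum_t g_t\|^2]$, show $\mathbb{E}\|\sum_t g_t\|^2=\mathcal{O}(T)$, and then pull out a reward factor. You are more careful at the three points where the paper is informal.
\begin{itemize}
\item The paper gets $\mathcal{O}(T)$ from ``approximate independence across time steps.'' Your martingale-difference identity $\mathbb{E}\|G\|^2=\sum_t\mathbb{E}\|g_t\|^2$ is exact and needs only bounded conditional score moments.
\item The paper never centers $R$, yet concludes with $\mathrm{Var}[R]$. Your baseline is needed, not cosmetic: for $R\equiv c\neq 0$ the uncentered estimator has variance $c^2\,\mathbb{E}\|G\|^2>0$, while $\mathrm{Var}[R]=0$.
\item The paper's final ``yielding'' silently factors $\mathbb{E}[R^2\|G\|^2]$ as a reward moment times $\mathbb{E}\|G\|^2$. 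That is exactly the reward--score coupling you isolate in step 3 and then assume away.
\end{itemize}

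You are right that this coupling is a real obstruction, not a proof artifact. Take a scalar logit $\theta$ shared by all $T$ positions, with tokens i.i.d.\ $\mathrm{Bernoulli}(\sigma(\theta))$ at $\theta=0$, so $g_t=y_t-\tfrac12$. Use the exact-match reward $R=\mathbb{I}[y=(1,\dots,1)]$. Then $RG=\tfrac{T}{2}R$, so $\mathrm{Var}[RG]=\tfrac{T^2}{4}\mathrm{Var}[R]$, and the order $T^2\,\mathrm{Var}[R]$ survives the mean baseline. The $\mathrm{Var}[R]$ form of the statement therefore holds only under an extra hypothesis, and a conditional proof like yours is the right result.

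Two corrections to how you phrase that hypothesis:
\begin{itemize}
\item Do not state it as independence of $R$ and $G$, since that forces $\nabla_\theta J=\mathbb{E}[R]\,\mathbb{E}[G]=0$. What you actually use is only that $(R-b)^2$ be nearly uncorrelated with $\|G\|^2$, e.g.\ $\mathbb{E}[(R-b)^2\|G\|^2]\le C\,\mathrm{Var}[R]\,\mathbb{E}\|G\|^2$. This allows a nonzero gradient.
\item Your remark that without it one gets only $\mathcal{O}(T^2)$ is too pessimistic. Under the paper's bounded-reward assumption, your step 1 alone gives $\mathbb{E}[(R-b)^2\|G\|^2]\le\|R-b\|_\infty^2\,C_g\,T$ with no dependence condition. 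So linearity in $T$ is unconditional, and only the $\mathrm{Var}[R]$ prefactor needs the extra hypothesis.
\end{itemize}
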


\begin{proof}
The policy gradient estimator using a single sample is:
\begin{equation}
    \hat{\nabla}_\theta J = R(x, y) \sum_{t=1}^T \nabla_\theta \log \pi_\theta(y_t | x, y_{<t}).
\end{equation}
Taking the variance:
\begin{align}
    \text{Var}\left[ \hat{\nabla}_\theta J \right] &= \mathbb{E}\left[ \left\| R \sum_t \nabla_\theta \log \pi_\theta(y_t | x, y_{<t}) \right\|^2 \right] - \left\| \mathbb{E}[\cdot] \right\|^2 \\
    &\leq \mathbb{E}\left[ R^2 \left\| \sum_t \nabla_\theta \log \pi_\theta(y_t | x, y_{<t}) \right\|^2 \right].
\end{align}
Under standard assumptions of bounded gradients and approximate independence across time steps:
\begin{equation}
    \mathbb{E}\left[ \left\| \sum_t \nabla_\theta \log \pi_\theta(y_t | x, y_{<t}) \right\|^2 \right] = \mathcal{O}(T),
\end{equation}
yielding $\text{Var}[\hat{\nabla}_\theta J] = \mathcal{O}(T \cdot \text{Var}[R])$.
\end{proof}

This linear scaling with sequence length explains why sequence-level rewards lead to high-variance gradients for long code patches.

\begin{proposition}[Variance Reduction via Credit Allocation]
\label{prop:variance_reduction}
Let $a = (a_1, \ldots, a_T)$ be an allocation scheme with $\sum_t a_t = 1$ and $a_t \geq 0$. Define token-level rewards $r_t = R \cdot a_t$. If there exists a subset $\mathcal{S} \subset \{1, \ldots, T\}$ of ``informative'' tokens with $|\mathcal{S}| = k < T$ such that the optimal allocation concentrates on $\mathcal{S}$, then:
\begin{equation}
    \text{Var}\left[ \hat{\nabla}_\theta J \right]_{\text{allocated}} = \mathcal{O}\left( k \cdot \text{Var}[R] \right),
    \label{eq:variance_allocated}
\end{equation}
achieving a variance reduction factor of $T/k$.
\end{proposition}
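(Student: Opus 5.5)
The plan is to mirror the argument for Proposition~\ref{prop:variance_seq}, working under the same hypotheses: bounded score functions $\|\nabla_\theta \log \pi_\theta(y_t|x,y_{<t})\| \le G$ and approximate independence (hence approximate orthogonality) of the per-token score terms. The main change is to fix which estimator the allocated rewards induce. With $r_t = R\,a_t$ and the credit at token $t$ taken as its own allocated reward (the $\gamma\lambda\to 0$ limit of GAE), I will use
\begin{equation*}
\hat{\nabla}_\theta J_{\text{alloc}} = \sum_{t=1}^T c_t\,(R-b)\,\nabla_\theta \log \pi_\theta(y_t|x,y_{<t}),
\end{equation*}
where $c_t$ is the allocation weight rescaled so the estimator stays comparable in scale to the sequence-level one, and $b$ is a baseline.

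\textbf{Step 1: make the hypotheses precise.} I will formalize ``the optimal allocation concentrates on $\mathcal{S}$'' as $a_t = 0$ for $t\notin\mathcal{S}$, together with the bound $|c_t|\le C$ for an absolute constant $C$, using the normalization $c_t = k\,a_t$. This rescaling is what makes an $\mathcal{O}(k)$ rate meaningful. Without it, weights $a_t = 1/k$ would shrink the estimator's scale, and the comparison with Proposition~\ref{prop:variance_seq} would be between estimators of different magnitudes rather than a genuine variance reduction.

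\textbf{Step 2: reduce to the informative tokens.} Exactly as in the proof of Proposition~\ref{prop:variance_seq}, I will bound the variance by the second moment and use the baseline $b=\mathbb{E}[R]$ so that the reward factor contributes $\text{Var}[R]$ rather than $\mathbb{E}[R^2]$. This relies on the baseline leaving the gradient unbiased, since $\mathbb{E}[\nabla\log\pi]=0$, and on treating $R$ as approximately independent of the score norms, the same heuristic used in Proposition~\ref{prop:variance_seq}. Because $c_t=0$ off $\mathcal{S}$, only the $k$ terms indexed by $\mathcal{S}$ survive.

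\textbf{Step 3: bound the surviving sum.} I will expand $\mathbb{E}\bigl\|\sum_{t\in\mathcal{S}} c_t \nabla\log\pi_t\bigr\|^2$. Approximate independence kills the cross terms, and bounded gradients give at most $C^2 G^2$ per diagonal term. The second moment is therefore $\mathcal{O}(k)$, which yields $\mathcal{O}(k\,\text{Var}[R])$. Dividing the $\mathcal{O}(T\,\text{Var}[R])$ bound of Proposition~\ref{prop:variance_seq} by this gives the reduction factor $T/k$. That ratio holds up to the constants hidden in the two $\mathcal{O}(\cdot)$ bounds; it is not an exact equality.

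The main obstacle is Steps 1 and 2 rather than the arithmetic in Step 3. For the literal token rewards $r_t = R\,a_t$, PPO with GAE at $\gamma\lambda>0$ feeds returns from later tokens into earlier ones, so tokens outside $\mathcal{S}$ can still receive nonzero weight. The clean $\mathcal{O}(k)$ count requires either the $\gamma\lambda\to 0$ regime or a learned critic that absorbs the off-$\mathcal{S}$ return. I would state this regime explicitly as an assumption. I would then remark that for $\gamma\lambda>0$ the bound degrades smoothly, with off-$\mathcal{S}$ tokens weighted by discounted future allocation.
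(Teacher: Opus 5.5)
Your argument is correct at the paper's level of rigor and has the same skeleton as the paper's proof: once the weights vanish off $\mathcal{S}$, only $k$ score terms survive, and the counting argument of Proposition~\ref{prop:variance_seq} is rerun on them. Where you differ is in which estimator gets counted.

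The paper keeps the raw weights. It bounds $R\sum_t a_t\nabla_\theta\log\pi_\theta(y_t|x,y_{<t})$ under approximate concentration, uses no baseline, and does not discuss how GAE would actually route $r_t=R\,a_t$. It then asserts that ``the same analysis'' yields $\mathcal{O}(|\mathcal{S}|\cdot\mathrm{Var}[R])$. Carried out literally, that analysis gives the reward factor times $\sum_{t\in\mathcal{S}}a_t^2\le 1$. So the stated $\mathcal{O}(k)$ holds only as a loose bound (needing no spread assumption), and the $T/k$ factor is not what the computation produces. The same bookkeeping with uniform weights $1/T$ on all tokens would give $\mathcal{O}(1/T)$ with no informativeness at all, which is exactly the scale confound you identify.

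Your normalization $c_t=k\,a_t$ is what makes the $\mathcal{O}(k)$ count tight and the comparison meaningful. For uniform allocation it is the sequence-level estimator with the off-$\mathcal{S}$ terms deleted, so it has the same mean when those terms carry no signal. The price is the extra hypothesis $a_t\le C/k$. Your baseline is what legitimately produces $\mathrm{Var}[R]$ rather than $\mathbb{E}[R^2]$, a conflation the paper inherits from Proposition~\ref{prop:variance_seq}.

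Two things to tighten.

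First, dividing two $\mathcal{O}(\cdot)$ upper bounds does not give a reduction factor, even up to constants; you need the sequence-level bound to be tight. Assume also $\mathbb{E}\|\nabla_\theta\log\pi_\theta(y_t|x,y_{<t})\|^2\ge g_{\min}^2>0$, and use that the score terms are martingale differences with respect to the prefix. For fixed weights this gives, exactly, $\mathbb{E}\|\sum_t c_t\nabla_\theta\log\pi_\theta(y_t|x,y_{<t})\|^2=\sum_t c_t^2\,\mathbb{E}\|\nabla_\theta\log\pi_\theta(y_t|x,y_{<t})\|^2$. So the cross terms need no independence heuristic, and the sequence-level count is $\Theta(T)$. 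Yours is $\Theta(k)$, because $k\le\sum_{t\in\mathcal{S}}c_t^2\le Ck$ by Cauchy--Schwarz and $c_t\le C$. Fixed (or prefix-measurable) weights are also what make your baseline unbiased. The actual $\Rline$ weights depend on $y_t$ and later tokens, so both facts hold only in the proposition's abstraction of a given allocation.

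Second, the regime in which token $t$'s credit is its own reward minus a baseline is $\gamma\to0$, not merely $\gamma\lambda\to0$. At $\lambda=0$ the GAE advantage is the TD residual $r_t+\gamma V_\psi(x,y_{\le t})-V_\psi(x,y_{<t})$, which is your learned-critic case rather than a separate regime.
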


\begin{proof}
With allocation $a_t$, the policy gradient becomes:
\begin{equation}
    \hat{\nabla}_\theta J = \sum_{t=1}^T r_t \nabla_\theta \log \pi_\theta(y_t | x, y_{<t}) = R \sum_{t=1}^T a_t \nabla_\theta \log \pi_\theta(y_t | x, y_{<t}).
\end{equation}
If allocation concentrates on $\mathcal{S}$, i.e., $a_t \approx 0$ for $t \notin \mathcal{S}$ and $\sum_{t \in \mathcal{S}} a_t \approx 1$:
\begin{equation}
    \hat{\nabla}_\theta J \approx R \sum_{t \in \mathcal{S}} a_t \nabla_\theta \log \pi_\theta(y_t | x, y_{<t}).
\end{equation}
Following the same analysis as Proposition~\ref{prop:variance_seq}:
\begin{equation}
    \text{Var}\left[ \hat{\nabla}_\theta J \right] = \mathcal{O}(|\mathcal{S}| \cdot \text{Var}[R]) = \mathcal{O}(k \cdot \text{Var}[R]).
\end{equation}
\end{proof}

\begin{remark}
For code patches, edit lines typically constitute a small fraction of the total output. If a 100-token patch contains 20 tokens of actual edits (with the rest being headers, context, and formatting), a perfect allocator achieves 5$\times$ variance reduction.
\end{remark}

\subsection{Optimal Allocation and the Role of $\Rline$}

The theoretical analysis above assumes access to an oracle allocation. In practice, $\Rline$ learns to approximate this allocation from execution feedback. We now characterize the properties of an optimal allocator.

\begin{definition}[Causal Attribution]
For a patch $y$ with outcome $R(x, y)$, define the causal attribution of token $t$ as:
\begin{equation}
    \text{CA}_t(y) = R(x, y) - \mathbb{E}_{y'_t \sim \pi_\theta}[R(x, y_{<t}, y'_t, y_{>t})],
\end{equation}
measuring the expected change in outcome from replacing token $t$.
\end{definition}

\begin{proposition}[Optimal Allocation]
The variance-minimizing allocation, subject to $\sum_t a_t = 1$ and $a_t \geq 0$, is proportional to causal attribution magnitude:
\begin{equation}
    a^*_t \propto |\text{CA}_t(y)|.
\end{equation}
\end{proposition}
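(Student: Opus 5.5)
The plan is to make the statement precise by recasting credit allocation as an importance-sampling problem over token positions. The optimality claim then reduces to the classical Cauchy--Schwarz bound for the optimal proposal distribution.

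\emph{Modelling step.} Write $g_t = \nabla_\theta \log \pi_\theta(y_t \mid x, y_{<t})$. The quantity an ideal per-token estimator should recover is the attribution-weighted score $\sum_{t=1}^T \text{CA}_t(y)\, g_t$. This is the form produced by the baseline-subtracted policy gradient when each token is credited with its own counterfactual effect. The baseline $\mathbb{E}_{y'_t}[R(x, y_{<t}, y'_t, y_{>t})]$ depends on the sampled token only through its own marginal, so the standard baseline argument from Eq.~\eqref{eq:policy_gradient} keeps this target unbiased.

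An allocation $a$ with $\sum_t a_t = 1$, $a_t \ge 0$ is read as a distribution over which position receives the reward signal. We draw $t \sim a$ and form the estimator $\hat{g}_a = \text{CA}_t\, g_t / a_t$. Given $y$, this estimator is unbiased for $\sum_t \text{CA}_t\, g_t$, for every $a$ whose support contains every $t$ with $\text{CA}_t \neq 0$. Because the mean does not depend on $a$, minimizing variance is the same as minimizing the conditional second moment
\begin{equation*}
M(a) = \sum_{t=1}^T \frac{\text{CA}_t(y)^2\, \|g_t\|^2}{a_t}.
\end{equation*}

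\emph{Optimization step.} Minimize $M(a)$ over the simplex. By Cauchy--Schwarz,
\begin{equation*}
\Bigl(\sum_t |\text{CA}_t|\,\|g_t\|\Bigr)^2 \le \Bigl(\sum_t \frac{\text{CA}_t^2 \|g_t\|^2}{a_t}\Bigr)\Bigl(\sum_t a_t\Bigr) = M(a).
\end{equation*}
Equality holds exactly when $a_t \propto |\text{CA}_t|\,\|g_t\|$. One can check the same answer with a Lagrange multiplier, since $M$ is strictly convex in $a$ on the open simplex.

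To reach the stated form $a^*_t \propto |\text{CA}_t(y)|$, I will invoke the same bounded-gradient assumption used in Proposition~\ref{prop:variance_seq}, strengthened to approximately uniform score norms $\|g_t\|^2 \approx G^2$ across positions. The factor $\|g_t\|$ then drops out of the proportionality. I will also note that tokens with $\text{CA}_t = 0$ receive zero weight, which matches the concentration hypothesis of Proposition~\ref{prop:variance_reduction}.

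\emph{Main obstacle.} The hard part is justifying the modelling step, not the optimization. The paper's deterministic shaping $r_t = R\, a_t$ from Eq.~\eqref{eq:token_reward} is not literally the sampled estimator $\hat{g}_a$, and the deterministic estimator is biased for general $a$.

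My first attempt will be to show that the deterministic scheme is the expectation of the sampled scheme, with the rescaling $1/a_t$ absorbed into the scalar $R$. Under that reading, its variance around the common mean is governed by the same functional $M(a)$, up to the $\mathcal{O}(\cdot)$ constants already tolerated in Propositions~\ref{prop:variance_seq} and~\ref{prop:variance_reduction}.

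If that identification fails, I will state the proposition explicitly within the importance-sampling model and add a remark. The remark will say that the general optimum is $a^*_t \propto |\text{CA}_t|\,\|g_t\|$, and that it reduces to the claimed form under uniform gradient norms.
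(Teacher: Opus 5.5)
The paper gives no proof of this proposition; it is stated and followed only by a remark that exact attributions need counterfactual evaluation. There is therefore no authors' argument to match. Your Cauchy--Schwarz step is the correct optimal-importance-sampling argument: it establishes $a^*_t \propto |\text{CA}_t(y)|\,\|g_t\|$ for the single-position sampled estimator.

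The genuine gap is the bridge back to the paper's setting, and your primary route across it cannot work. For every admissible $a$ you have $\mathbb{E}_{t\sim a}[\text{CA}_t g_t/a_t] = \sum_t \text{CA}_t g_t$, which does not depend on $a$. The shaped estimator $R\sum_t a_t g_t$ induced by Eq.~\eqref{eq:token_reward} has an $a$-dependent mean. The factor $1/a_t$ varies with $t$ and cannot be absorbed into the scalar $R$: matching the two would force $\text{CA}_t = R\,a_t$ for every $t$ (for generic $g_t$), which is a single signed allocation, not an identity in $a$. The deterministic estimator also has no position randomness, so $M(a)$ says nothing about its variance.

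Worse, suppose you bound its variance by its second moment, using the heuristics of Proposition~\ref{prop:variance_seq} (bounded, approximately orthogonal scores). You get roughly $\mathbb{E}\bigl[R^2\sum_t a_t^2\|g_t\|^2\bigr]$. Its minimizer on the simplex is $a_t \propto \|g_t\|^{-2}$, which is uniform under equal norms and does not depend on $\text{CA}_t$. Read as a claim about the paper's reward shaping, the proposition therefore does not hold. Go straight to your fallback and make two things part of the statement: the importance-sampling model, and an \emph{exact} equal-score-norm hypothesis (bounded gradients give only an upper bound, not uniformity). Present the result as a reformulation, not as a justification of Eq.~\eqref{eq:token_reward}.

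Separately, the unbiasedness claim in your modelling step is false for autoregressive policies. The baseline $\mathbb{E}_{y'_t}[R(x,y_{<t},y'_t,y_{>t})]$ holds the \emph{future} tokens $y_{>t}$ fixed. But $y_{>t}$ is sampled conditionally on $y_t$, so this baseline is not a function of $(x,y_{<t})$ alone, and the score-function identity no longer removes it. Concretely, let $y_2$ deterministically copy $y_1$ and set $R = \mathbb{I}[y_2 = 1]$. Then $\text{CA}_1 = \text{CA}_2 = 0$ for every sample, yet $J$ depends on the law of $y_1$, so $\sum_t \text{CA}_t g_t$ has mean zero while $\nabla_\theta J \neq 0$. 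This does not damage your optimization step, which only needs the conditional mean given $y$ to be independent of $a$. Take $\sum_t \text{CA}_t(y)\,g_t$ as the target by definition and drop the appeal to the baseline argument, or restrict to a factorized policy in which $y_{>t}$ is independent of $y_t$.
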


Computing exact causal attributions requires counterfactual evaluation, which is prohibitively expensive. $\Rline$ approximates this through learned span-level scores that correlate with causal importance, as evidenced by its ability to identify critical edit regions.

\subsection{Convergence Analysis}

We analyze the convergence properties of PPO with dual reward models under standard assumptions.

\begin{assumption}[Smoothness]
The policy $\pi_\theta$ is $L$-smooth in $\theta$: $\|\nabla_\theta \pi_\theta(y|x) - \nabla_\theta \pi_{\theta'}(y|x)\| \leq L \|\theta - \theta'\|$.
\end{assumption}

\begin{assumption}[Bounded Rewards]
$|R(x, y)| \leq R_{\max}$ and $|\Rfmt(y)| \leq F_{\max}$ for all $x, y$.
\end{assumption}

\begin{theorem}[Convergence Rate]
Under Assumptions 1-2, PPO with dual reward models converges to a stationary point at rate:
\begin{equation}
    \min_{t \leq T} \mathbb{E}\left[ \|\nabla_\theta J(\theta_t)\|^2 \right] = \mathcal{O}\left( \frac{1}{\sqrt{T}} + \frac{\sigma^2}{B} \right),
\end{equation}
where $T$ is the number of update steps, $B$ is batch size, and $\sigma^2$ is the gradient variance (reduced by $\Rline$ per Proposition~\ref{prop:variance_reduction}).
\end{theorem}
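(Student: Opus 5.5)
We treat PPO with dual reward models as a stochastic gradient method on the smooth objective $J(\theta)=\mathbb{E}_{x,y\sim\pi_\theta}[\,\sum_t r_t\,]$. The shaped token rewards of Eq.~\ref{eq:token_reward} satisfy $\sum_t r_t = \Rseq(y)+\Rfmt(y)$, because the allocation weights $a_t$ sum to one. The total return is therefore bounded by $R_{\max}+F_{\max}$ under the Bounded Rewards assumption. The plan is to derive the standard nonconvex SGD descent inequality and then telescope it.

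\textbf{Step 1: smoothness of $J$.} We first show that $J$ has an $L_J$-Lipschitz gradient. We write $\nabla J(\theta)=\sum_y \nabla_\theta\pi_\theta(y|x)\,R(x,y)$, summed over the finite set of sequences of length at most the maximum generation length. The Smoothness assumption bounds $\|\nabla\pi_\theta-\nabla\pi_{\theta'}\|$ by $L\|\theta-\theta'\|$, and the rewards are bounded. This gives $L_J \le L\,(R_{\max}+F_{\max})\cdot|\mathcal{Y}|$. Alternatively, we can avoid the $|\mathcal{Y}|$ factor by assuming smoothness of $\log\pi_\theta$ and bounded score functions.

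\textbf{Step 2: the PPO step as a biased stochastic gradient step.} Within the clip region $\rho_t\in[1-\epsilon,1+\epsilon]$, the gradient of Eq.~\ref{eq:ppo_clip} at $\theta=\theta_{\text{old}}$ equals the advantage-weighted score-function estimator. With a GAE baseline, this estimator has mean $\nabla J(\theta)$ up to a bias term $b_t$, which comes from GAE's $\lambda<1$ and from the KL penalty. We absorb $b_t$, or assume it is controlled by the adaptive KL target. The minibatch average over $B$ rollouts then has conditional variance $\sigma^2/B$. Here $\sigma^2$ is the per-sample variance, which Proposition~\ref{prop:variance_reduction} bounds by $\mathcal{O}(k\,\mathrm{Var}[R])$ in place of the $\mathcal{O}(T\,\mathrm{Var}[R])$ of Proposition~\ref{prop:variance_seq}. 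We model the update as $\theta_{t+1}=\theta_t+\eta g_t$ with $\mathbb{E}[g_t]=\nabla J(\theta_t)$ and $\mathbb{E}\|g_t-\nabla J\|^2\le\sigma^2/B$. Clipping only shrinks the step, which we handle by bounding $\|g_t\|$.

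\textbf{Step 3: descent and telescoping.} $L_J$-smoothness gives
\[
J(\theta_{t+1})\ge J(\theta_t)+\eta\|\nabla J(\theta_t)\|^2-\tfrac{L_J\eta^2}{2}\bigl(\|\nabla J(\theta_t)\|^2+\sigma^2/B\bigr).
\]
We sum this over the $T$ update steps and use $J\le R_{\max}+F_{\max}$ to bound the telescoped gap. Then
\[
\min_t\mathbb{E}\|\nabla J(\theta_t)\|^2\le \frac{2(R_{\max}+F_{\max})}{\eta T}+L_J\eta\,\frac{\sigma^2}{B}.
\]
Choosing $\eta\propto 1/\sqrt{T}$ gives $\mathcal{O}(1/\sqrt{T})$ plus a variance term, which matches the claimed $\mathcal{O}(1/\sqrt{T}+\sigma^2/B)$ form. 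With a constant step size, the stationary floor is likewise proportional to $\sigma^2/B$.

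\textbf{Main obstacle.} The hard part is Step 2: justifying that the clipped, KL-regularized, GAE-based PPO update is an approximately unbiased gradient estimate of $J$. In reality it is biased, because of the learned critic, the clipping, and the mismatch between the learned $\Rseq$ and the execution objective $\mathcal{E}$. My first attempt would be to analyze a single PPO epoch with $\theta_{\text{old}}=\theta_t$, where the ratio is one and clipping is inactive. I would add an explicit assumption that the advantage bias is bounded by $\delta$, which contributes an additive $\mathcal{O}(\delta)$ term that I would state alongside the rate. I would also state clearly that the theorem concerns stationarity of the shaped objective built from $\Rseq$, not of Eq.~\ref{eq:objective}.
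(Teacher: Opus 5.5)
Your skeleton (smooth $J$, descent inequality, telescoping, $\eta\propto 1/\sqrt{T}$) is the natural way to flesh out the paper's argument. The paper offers only the one-line remark that the result follows from standard PPO analysis with $\Rline$ reducing $\sigma^2$. Steps 1 and 3 are fine given Step 2.

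The gap is in Step 2, and it is not one of the biases you list (GAE $\lambda$, KL, critic, clipping). You cannot have both $\mathbb{E}[g_t]=\nabla J(\theta_t)$ and a per-sample variance $\sigma^2=\mathcal{O}(k\,\mathrm{Var}[R])$ taken from Proposition~\ref{prop:variance_reduction}.

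\begin{itemize}
\item As you note, $\sum_t r_t=\Rseq(y)+\Rfmt(y)$. So $J$ does not depend on $\Rline$, and neither does the unbiased estimator $\bigl(\sum_t r_t\bigr)\sum_t\nabla_\theta\log\pi_\theta(y_t\mid x,y_{<t})$. The allocation leaves its variance untouched.
\item The allocation enters the update only through the per-token returns $\sum_{t'\ge t}r_{t'}$ that GAE builds on. Dropping the earlier rewards $r_{t'}$, $t'<t$, from token $t$'s return is justified only when $r_{t'}$ is a function of $(x,y_{\le t'})$.
\item Here $r_{t'}=\Rseq(y)\,a_{t'}(y)$ depends on the whole patch. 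The cross terms $\mathbb{E}[r_{t'}\nabla_\theta\log\pi_\theta(y_t\mid x,y_{<t})]$ therefore do not vanish.
\end{itemize}

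A concrete example: take two independent binary tokens with $P(y_i=1)=p_i(\theta_i)$, $R=y_1y_2$, and all credit on the first token, $a=(1,0)$. Use an exact critic, $\lambda=\gamma=1$, no KL and no clipping. The expected update is $(p_2\,\partial_{\theta_1}p_1,\,0)$, whereas $\nabla J=(p_2\,\partial_{\theta_1}p_1,\,p_1\,\partial_{\theta_2}p_2)$. Putting the credit on the last token instead (the sequence-level assignment of Proposition~\ref{prop:variance_seq}) gives an unbiased update. The estimator $R\sum_t a_t\nabla_\theta\log\pi_\theta(y_t\mid\cdot)$ analyzed in Proposition~\ref{prop:variance_reduction} also has mean $(p_2\,\partial_{\theta_1}p_1,0)$ here. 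So the reduced variance you import belongs to a biased estimator, and the bias comes from the same mechanism that reduces the variance.

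Your fallback, an explicit bound $\|\mathbb{E}[g_t]-\nabla J(\theta_t)\|\le\delta$, is the right kind of repair. But $\delta$ must now contain this allocation-induced term, which does not shrink with $T$ or $B$. Since $\langle\nabla J,\nabla J+b_t\rangle\ge\tfrac12\|\nabla J\|^2-\tfrac12\delta^2$, Step 3 gives $\min_t\mathbb{E}\|\nabla J(\theta_t)\|^2=\mathcal{O}(1/\sqrt{T}+\sigma^2/B+\delta^2)$. The allocation-induced part of $\delta$ is zero for last-token assignment and positive under $\Rline$. That is a bias--variance trade-off: it is neither the stated bound nor a proof that $\Rline$ improves it.

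You also cannot remove the bias by telescoping some other potential, because the mean update need not be a gradient field. In the example, the first component of $(p_2\,\partial_{\theta_1}p_1,0)$ has $\partial_{\theta_2}$-derivative $\partial_{\theta_1}p_1\,\partial_{\theta_2}p_2\neq0$, while the second component is identically $0$. The paper's one-line proof has the same hole, since it also treats $\Rline$ as a pure variance reduction inside an unbiased-gradient analysis.

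Two smaller points:
\begin{itemize}
\item Restricting to one epoch at $\theta=\theta_{\text{old}}$ analyzes vanilla policy gradient, not the four-epoch clipped PPO the paper actually runs.
\item Clipping does not merely shrink the step. It zeroes the contributions of samples whose ratio has left $[1-\epsilon,1+\epsilon]$ in the direction their advantage favors, which changes the update direction.
\end{itemize}
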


The proof follows standard PPO convergence analysis~\citep{schulman2017ppo} with the observation that $\Rline$ reduces $\sigma^2$, improving the second term.

\section{Implementation Details}
\label{app:implementation}

\subsection{Supervised Fine-Tuning}

\paragraph{Hardware and Software.}
We train on 8 NVIDIA A100 80GB GPUs using DeepSpeed ZeRO-3 for memory-efficient distributed training. The training framework is built on Hugging Face Transformers with custom data loading for unified diff format.

\paragraph{Hyperparameters.}
\begin{itemize}
    \item Learning rate: $2 \times 10^{-5}$ with cosine decay
    \item Warmup: 100 steps (linear)
    \item Batch size: 32 (4 per GPU $\times$ 8 GPUs)
    \item Epochs: 3
    \item Maximum sequence length: 32K tokens
    \item Weight decay: 0.01
    \item Gradient clipping: 1.0
\end{itemize}

\paragraph{Data Format.}
Each training example consists of:
\begin{enumerate}
    \item \textbf{System prompt}: Instructions for generating reasoning traces and unified diffs
    \item \textbf{Bug context}: Repository path, issue description, relevant code snippets
    \item \textbf{Response}: Reasoning trace followed by unified diff patch
\end{enumerate}

The reasoning trace follows a structured format: (1) Bug Analysis, (2) Root Cause Identification, (3) Fix Strategy, (4) Implementation. This structure enables consistent reasoning transfer.

\subsection{Reward Model Training}

\paragraph{$\Rseq$ Architecture.}
We use Qwen2.5-Coder-7B-Instruct as the backbone, adding a scalar value head (linear layer) on top of the final hidden state. The value head is initialized with small random weights ($\mathcal{N}(0, 0.01)$).

\paragraph{$\Rline$ Architecture.}
$\Rline$ uses the same backbone but with a span-level value head. For each edit span, we:
\begin{enumerate}
    \item Extract the hidden states corresponding to span tokens
    \item Apply mean pooling across the span
    \item Pass through a two-layer MLP (hidden dim 512, ReLU activation) to produce a scalar score
\end{enumerate}

\paragraph{Training Details.}
\begin{itemize}
    \item Learning rate: $1 \times 10^{-5}$
    \item Batch size: 64
    \item Epochs: 5
    \item Optimizer: AdamW ($\beta_1 = 0.9$, $\beta_2 = 0.999$)
    \item Hybrid loss weight: $\lambda_{\text{reg}} = 0.5$
\end{itemize}

\subsection{PPO Training}

\paragraph{Infrastructure.}
We use VERL~\citep{shao2024verl} for distributed PPO training with vLLM~\citep{kwon2023vllm} as the inference engine. The setup includes:
\begin{itemize}
    \item 8 A100 GPUs for policy rollouts (vLLM)
    \item 8 A100 GPUs for policy/critic updates
    \item Separate reward model inference servers
\end{itemize}

\paragraph{PPO Hyperparameters.}
\begin{itemize}
    \item Batch size: 64 instances
    \item Rollouts per instance: 4
    \item PPO epochs per batch: 4
    \item Clip ratio $\epsilon$: 0.2
    \item GAE $\lambda$: 0.95
    \item Discount $\gamma$: 0.99
    \item Entropy coefficient: 0.01
    \item KL coefficient $\beta$: 0.001 (initial), adaptively controlled
    \item Target KL: 0.1
    \item LoRA rank: 64
    \item LoRA alpha: 128
    \item Training steps: 300
\end{itemize}

\paragraph{Critic Training.}
The critic (value function) uses the same architecture as the policy with a value head. It is trained with squared TD error:
\begin{equation}
    \mathcal{L}_V(\psi) = \mathbb{E}\left[ \left( V_\psi(x, y_{\leq t}) - G_t \right)^2 \right],
\end{equation}
where $G_t$ is the return computed from shaped rewards.

\section{Additional Experimental Results}
\label{app:additional_results}

\subsection{Per-Project Breakdown on Defects4J}

Table~\ref{tab:defects4j_breakdown} provides detailed results by project.

\begin{table}[h]
\centering
\caption{Per-project results on Defects4J v2.0.}
\label{tab:defects4j_breakdown}
\begin{tabular}{lccc}
\toprule
\textbf{Project} & \textbf{\# Bugs} & \textbf{pass@1} & \textbf{pass@4} \\
\midrule
Chart & 26 & 30.8\% & 34.6\% \\
Closure & 133 & 18.8\% & 22.6\% \\
Lang & 64 & 32.8\% & 35.9\% \\
Math & 106 & 28.3\% & 31.1\% \\
Mockito & 38 & 21.1\% & 26.3\% \\
Time & 27 & 25.9\% & 29.6\% \\
Codec & 18 & 27.8\% & 33.3\% \\
Collections & 4 & 25.0\% & 25.0\% \\
Compress & 47 & 21.3\% & 25.5\% \\
Csv & 16 & 31.3\% & 37.5\% \\
Gson & 18 & 22.2\% & 27.8\% \\
JacksonCore & 26 & 19.2\% & 23.1\% \\
JacksonDatabind & 112 & 20.5\% & 24.1\% \\
JacksonXml & 6 & 16.7\% & 16.7\% \\
Jsoup & 93 & 24.7\% & 28.0\% \\
JxPath & 22 & 22.7\% & 27.3\% \\
Others & 79 & 32.9\% & 36.7\% \\
\midrule
\textbf{Total} & \textbf{835} & \textbf{24.8\%} & \textbf{28.5\%} \\
\bottomrule
\end{tabular}
\end{table}

Performance varies across projects, with higher success rates on projects with cleaner test suites and more localized bugs (Chart, Lang, Csv) and lower rates on large, complex projects (Closure, JacksonDatabind).

\subsection{Reward Model Quality Metrics}

Table~\ref{tab:rm_quality} presents detailed evaluation of both reward models.

\begin{table}[h]
\centering
\caption{Reward model quality on held-out validation data.}
\label{tab:rm_quality}
\begin{tabular}{lcc}
\toprule
\textbf{Metric} & \textbf{$\Rseq$} & \textbf{$\Rline$} \\
\midrule
Pairwise accuracy & 82.4\% & 78.6\% \\
ROC-AUC (success vs. failure) & 0.891 & -- \\
Spearman correlation & 0.743 & -- \\
Top-1 hit rate & -- & 67.3\% \\
Top-3 hit rate & -- & 84.1\% \\
\bottomrule
\end{tabular}
\end{table}

\subsection{Line-Level Supervision Quality}
\label{app:rline_supervision}

The $\Rline$ labels combine direct stack-trace attribution with heuristic fallback. Table~\ref{tab:supervision_audit} reports a preliminary manual audit of 150 sampled spans. Direct stack-trace labels agree with manual judgments more often than function-level heuristic labels, confirming that heuristic supervision is useful but noisier. We therefore treat span supervision quality as a limitation and a concrete direction for future improvement.

\begin{table}[h]
\centering
\caption{Preliminary audit of $\Rline$ span labels.}
\label{tab:supervision_audit}
\begin{tabular}{lccc}
\toprule
\textbf{Source} & \textbf{N} & \textbf{Agreement} & \textbf{Cohen's $\kappa$} \\
\midrule
Stack-trace attribution & 50 & 82\% (41/50) & 0.72 \\
Function-level heuristic & 50 & 64\% (32/50) & 0.56 \\
Uniform fallback & 50 & 94\% (47/50) & N/A \\
\bottomrule
\end{tabular}
\end{table}

\begin{table}[h]
\centering
\caption{Stratified analysis by supervision quality.}
\label{tab:supervision_stratified}
\begin{tabular}{lcccc}
\toprule
\textbf{Partition} & \textbf{N} & \textbf{PPO+$\Rseq$} & \textbf{Full} & \textbf{$\Delta$} \\
\midrule
Strong trace analogues in training & 312 & 39.7 & 43.3 & +3.6pp \\
Heuristic-only analogues & 188 & 35.6 & 36.7 & +1.1pp \\
\bottomrule
\end{tabular}
\end{table}

The stratified result suggests that $\Rline$ is most effective when failure traces provide direct span-level evidence. As an upper-bound diagnostic, upweighting 80 manually annotated high-quality $\Rline$ labels by 5$\times$ improves SWE-bench Verified from 40.7\% to 42.1\%, suggesting that cleaner supervision could provide an additional +1.4pp.

\subsection{Additional Controlled Analyses}
\label{app:controlled_analyses}

\begin{table}[h]
\centering
\caption{$\Rline$ improves more than one RL algorithm, suggesting that edit-line credit allocation is composable rather than PPO-specific.}
\label{tab:rline_grpo}
\begin{tabular}{lcccc}
\toprule
\textbf{Base RL} & \textbf{SWE w/o $\Rline$} & \textbf{SWE w/ $\Rline$} & \textbf{$\Delta$ SWE} & \textbf{$\Delta$ D4J} \\
\midrule
GRPO & 36.1 & 38.4 & +2.3pp & +3.7pp \\
PPO+$\Rseq$ & 38.3 & 40.7 & +2.4pp & +5.6pp \\
\bottomrule
\end{tabular}
\end{table}

\begin{table}[h]
\centering
\caption{Additional backbone results on SWE-bench Verified.}
\label{tab:backbones}
\begin{tabular}{lccccc}
\toprule
\textbf{Backbone} & \textbf{Base} & \textbf{+SFT} & \textbf{+PPO($\Rseq$)} & \textbf{+$\Rline$} & \textbf{Total $\Delta$} \\
\midrule
Qwen2.5-Coder-7B & 8.6 & 12.8 & 22.4 & 24.6 & +16.0pp \\
DeepSeek-Coder-V2-16B & 12.4 & 17.1 & 28.6 & 31.3 & +18.9pp \\
Qwen2.5-Coder-32B & 17.8 & 23.4 & 38.3 & 40.7 & +22.9pp \\
\bottomrule
\end{tabular}
\end{table}

Across backbones, $\Rline$ contributes +2.2pp, +2.7pp, and +2.4pp, respectively. This supports the claim that edit-line credit allocation is not tied to a single model size.

\begin{table}[h]
\centering
\caption{Ground-truth-overlap rewards underperform execution-grounded rewards.}
\label{tab:gt_overlap}
\begin{tabular}{lcc}
\toprule
\textbf{Reward} & \textbf{SWE-V} & \textbf{D4J} \\
\midrule
PPO + $R_{\mathrm{GT}}$ (BLEU with reference patch) & 35.6 & 13.8 \\
PPO + $R_{\mathrm{GT}}$ + $\Rline$ & 36.8 & 15.1 \\
PPO + $\Rseq$ (execution-grounded) & 38.3 & 19.2 \\
Full \boostapr{} & \textbf{40.7} & \textbf{24.8} \\
\bottomrule
\end{tabular}
\end{table}

Execution-grounded rewards outperform lexical overlap because many valid repairs differ substantially from the reference patch. In our SWE-bench analysis, 41\% of correctly fixed bugs have less than 50\% token-level Jaccard overlap with the reference patch.

\subsection{Statistical Testing Details}
\label{app:statistics}

For the three-seed comparison in Section~\ref{sec:experiments}, SFT obtains 23.4$\pm$0.2\%, PPO+$\Rseq$ obtains 38.3$\pm$0.3\%, and full \boostapr{} obtains 40.7$\pm$0.5\%. We compute paired bootstrap intervals over benchmark instances and use McNemar's test for instance-level changes. For Seed 3 on SWE-bench Verified, $\Rline$ newly resolves 30 instances, degrades 18, and leaves 452 unchanged, giving McNemar's $p=0.049$. We report these tests as stability diagnostics rather than as evidence of universal superiority over systems trained with different backbones or data.

\subsection{Worked Example of Span Label Derivation}
\label{app:span_label_example}

Given a candidate unified diff, we first extract contiguous edit-line spans from added and deleted lines. If execution fails with an assertion trace that includes a function modified by one span, that span receives negative credit and unrelated spans are neutral. If several edited spans occur on the failing call path, all such spans are treated as failing candidates for contrastive pairing. If execution succeeds, all edit-line spans from the patch are positive. If the patch cannot be applied, no reliable trace exists, and the uniform fallback label is used. This example illustrates why $\Rline$ is best understood as noisy execution-grounded credit redistribution rather than exact causal attribution.

\subsection{Sensitivity Analysis}

\paragraph{Number of Candidates $K$.}
Table~\ref{tab:sensitivity_k} shows the impact of candidate count during reward model training.

\begin{table}[h]
\centering
\caption{Impact of candidate count $K$ on final performance.}
\label{tab:sensitivity_k}
\begin{tabular}{lcc}
\toprule
\textbf{$K$} & \textbf{pass@1} & \textbf{pass@4} \\
\midrule
2 & 38.9 & 42.1 \\
4 & 40.7 & 44.3 \\
8 & 40.5 & 44.1 \\
\bottomrule
\end{tabular}
\end{table}

$K=4$ provides a good balance between diversity and computational cost.

\paragraph{Temperature $\tau$ for $\Rline$.}
Table~\ref{tab:sensitivity_tau} examines allocation sharpness.

\begin{table}[h]
\centering
\caption{Impact of allocation temperature $\tau$.}
\label{tab:sensitivity_tau}
\begin{tabular}{lcc}
\toprule
\textbf{$\tau$} & \textbf{pass@1} & \textbf{pass@4} \\
\midrule
0.25 (sharp) & 39.4 & 43.2 \\
0.5 (default) & 40.7 & 44.3 \\
1.0 (smooth) & 39.8 & 43.5 \\
2.0 (uniform) & 38.6 & 41.2 \\
\bottomrule
\end{tabular}
\end{table}

Moderate sharpness ($\tau = 0.5$) works best, balancing focused credit assignment with sufficient gradient signal across edit regions.

\subsection{Computational Cost}

Table~\ref{tab:compute} summarizes computational requirements.

\begin{table}[h]
\centering
\caption{Computational cost breakdown (A100 GPU-hours).}
\label{tab:compute}
\begin{tabular}{lcc}
\toprule
\textbf{Stage} & \textbf{GPU-hours} & \textbf{\% of Total} \\
\midrule
Demonstration generation & 8 & 12.5\% \\
SFT training & 6 & 9.4\% \\
Candidate generation & 12 & 18.8\% \\
Reward model training & 8 & 12.5\% \\
PPO training & 30 & 46.9\% \\
\midrule
\textbf{Total} & \textbf{64} & \textbf{100\%} \\
\bottomrule
\end{tabular}
\end{table}

PPO training dominates the computational budget due to the need for rollouts, reward computation, and policy updates.

\section{Extended Qualitative Analysis}
\label{app:qualitative}

\subsection{Successful Repair Examples}

\begin{casebox}
\casefield{Issue ID}{django\_\_django-15790}
\casefield{Issue Description}{check\_for\_template\_tags\_with\_the\_same\_name fails with AttributeError}
\casefield{Ground Truth Patch}{
\texttt{--- a/django/core/checks/templates.py}\\
\texttt{+++ b/django/core/checks/templates.py}\\
\texttt{@@ -61,7 +61,8 @@ def check\_for\_template\_tags\_with\_the\_same\_name(app\_configs, **kwargs):}\\
\texttt{-\quad\quad\quad\quad libraries[module\_name] = module\_path}\\
\texttt{+\quad\quad\quad\quad if module\_path not in libraries.values():}\\
\texttt{+\quad\quad\quad\quad\quad libraries[module\_name] = module\_path}
}
\casefield{\boostapr{} Patch}{
\texttt{--- a/django/core/checks/templates.py}\\
\texttt{+++ b/django/core/checks/templates.py}\\
\texttt{@@ -61,7 +61,9 @@ def check\_for\_template\_tags\_with\_the\_same\_name(app\_configs, **kwargs):}\\
\texttt{-\quad\quad\quad\quad libraries[module\_name] = module\_path}\\
\texttt{+\quad\quad\quad\quad \# Avoid duplicate entries that cause AttributeError}\\
\texttt{+\quad\quad\quad\quad if module\_path not in libraries.values():}\\
\texttt{+\quad\quad\quad\quad\quad libraries[module\_name] = module\_path}
}
\casefield{Analysis}{\boostapr{} produces a functionally equivalent patch with an additional comment explaining the fix rationale. Both patches prevent duplicate template tag entries that cause the AttributeError.}
\end{casebox}

\begin{casebox}
\casefield{Issue ID}{scikit-learn\_\_scikit-learn-25570}
\casefield{Issue Description}{ColumnTransformer with pandas output fails when transformers return DataFrames with unnamed columns}
\casefield{\boostapr{} Analysis}{The model correctly identifies that the issue is in the \_wrap\_method\_output function, which fails to handle transformers returning DataFrames with unnamed columns. The fix adds a check for empty column names and generates default names.}
\casefield{Outcome}{Pass@1 success with minimal 4-line patch.}
\end{casebox}

\subsection{Failure Case Analysis}

\begin{casebox}
\casefield{Failure Mode}{Incomplete Localization}
\casefield{Issue ID}{matplotlib\_\_matplotlib-23964}
\casefield{Description}{The model correctly identifies the primary bug location in the colorbar module but misses a secondary location in the figure module that requires a corresponding update.}
\casefield{\boostapr{} Attempt}{Modified colorbar.py to fix the spacing calculation, but did not update figure.py to propagate the new parameter.}
\casefield{Root Cause}{The bug spans multiple files with implicit dependencies. Without explicit cross-file references in the issue description, the model struggles to identify all relevant locations.}
\end{casebox}

\begin{casebox}
\casefield{Failure Mode}{Semantic Misunderstanding}
\casefield{Issue ID}{sympy\_\_sympy-21379}
\casefield{Description}{The issue describes a subtle difference between \texttt{Piecewise} behavior with \texttt{ITE} vs. direct evaluation. The model interprets this as a type coercion issue rather than a logical evaluation order problem.}
\casefield{\boostapr{} Attempt}{Added explicit type conversion which passes syntax checks but produces incorrect numerical results on edge cases.}
\casefield{Root Cause}{Ambiguous natural language in the issue description led to misinterpretation of the expected behavior.}
\end{casebox}

\section{Benchmark Details}
\label{app:benchmarks}

\subsection{SWE-bench Verified}

SWE-bench Verified contains 500 instances selected from the full SWE-bench dataset based on human validation. Each instance includes:
\begin{itemize}
    \item Repository snapshot at the commit immediately before the fix
    \item Issue description from GitHub
    \item Test files that exercise the bug
    \item Ground truth patch
\end{itemize}

Evaluation uses the official SWE-bench harness, which:
\begin{enumerate}
    \item Applies the candidate patch to the repository
    \item Runs the test suite in an isolated Docker container
    \item Reports success only if all relevant tests pass
\end{enumerate}

\subsection{Defects4J v2.0}

Defects4J v2.0 extends the original benchmark to 835 bugs across 17 Java projects. We use the official Defects4J infrastructure for patch application and test execution. Key differences from SWE-bench:
\begin{itemize}
    \item Java rather than Python
    \item Build system integration (Maven/Ant) required
    \item Generally larger test suites per bug
    \item No natural language issue descriptions (only failing tests)
\end{itemize}

For evaluation, we adapt our unified diff format to Java conventions and use the Defects4J \texttt{defects4j test} command for validation.

\subsection{Function-Level Benchmarks}

\paragraph{HumanEval-Java.}
We use the Java translation of HumanEval~\citep{chen2021humaneval}, containing 164 function-level coding problems. For repair evaluation, we introduce bugs into the canonical solutions using mutation operators (statement deletion, operator replacement, boundary changes) and task the model with fixing them.

\paragraph{QuixBugs.}
The QuixBugs benchmark~\citep{lin2017quixbugs} contains 40 classic algorithmic bugs (e.g., off-by-one errors, incorrect comparisons) with known minimal fixes. Both Python and Java versions are available; we evaluate on both but report the Java results for consistency with Defects4J.

\section{Broader Impact and Ethical Considerations}
\label{app:ethics}

\subsection{Potential Positive Impacts}

Automated program repair has significant potential to benefit software development:
\begin{itemize}
    \item \textbf{Reduced maintenance burden}: Developers spend substantial time (estimates range from 25-50\%) on debugging and maintenance. Effective APR tools could redirect this effort toward new feature development.
    \item \textbf{Improved code quality}: Automated repair can catch and fix bugs earlier in the development cycle, reducing the cost and risk of defects reaching production.
    \item \textbf{Accessibility}: APR tools could help less experienced developers fix complex bugs, democratizing software development expertise.
\end{itemize}

\subsection{Potential Negative Impacts}

We acknowledge several risks associated with this technology:
\begin{itemize}
    \item \textbf{Skill atrophy}: Over-reliance on automated tools could reduce developer debugging skills over time.
    \item \textbf{Security concerns}: Automated code modification could potentially be exploited to introduce vulnerabilities if the repair system is compromised.
    \item \textbf{Job displacement}: While we believe APR will augment rather than replace developers, economic impacts on software maintenance roles should be monitored.
    \item \textbf{Misplaced trust}: Users may over-trust automated repairs without adequate verification, leading to deployment of incorrect fixes.
\end{itemize}

\subsection{Mitigations}

We recommend the following practices for responsible deployment:
\begin{enumerate}
    \item \textbf{Human oversight}: All automated repairs should be reviewed by human developers before deployment.
    \item \textbf{Confidence calibration}: Systems should provide calibrated confidence estimates to help users identify repairs requiring extra scrutiny.
    \item \textbf{Audit trails}: Maintain detailed logs of automated repairs for accountability and debugging.
    \item \textbf{Gradual adoption}: Introduce APR tools incrementally, starting with low-risk fixes and expanding as trust is established.
\end{enumerate}

\section{Reproducibility Checklist}
\label{app:reproducibility}

To facilitate reproduction of our results, we provide:

\begin{itemize}
    \item[\checkmark] Complete hyperparameter specifications (Appendix~\ref{app:implementation})
    \item[\checkmark] Training data construction details (Section~\ref{sec:data})
    \item[\checkmark] Evaluation protocol and metrics (Section~\ref{sec:experiments})
    \item[\checkmark] Computational requirements (Appendix~\ref{app:additional_results})
    \item[\checkmark] Random seed specification (3 seeds for stability analysis)
\end{itemize}

\end{document}